\newif\iffinal
\newcommand{\cs}{\mathbf{s_t}} 
\newcommand{\ns}{\mathbf{s_{t+1}}}
\newcommand{\ca}{\mathbf{a_t}}
\DeclarePairedDelimiterX{\infdivx}[2]{(}{)}{%
  #1\;\delimsize\|\;#2%
}
\newtheorem{theorem}{Theorem}[section]
\newtheorem{corollary}{Corollary}[theorem]
\newtheorem{lemma}[theorem]{Lemma}
\theoremstyle{definition}
\newcommand{\figleft}{{\em (Left)}}
\newcommand{\figcenter}{{\em (Center)}}
\newcommand{\figright}{{\em (Right)}}
\newcommand{\figtop}{{\em (Top)}}
\newcommand{\figbottom}{{\em (Bottom)}}
\def\eqref#1{equation~\ref{#1}}
\def\1{\bm{1}}
\DeclareMathAlphabet{\mathsfit}{\encodingdefault}{\sfdefault}{m}{sl}
\SetMathAlphabet{\mathsfit}{bold}{\encodingdefault}{\sfdefault}{bx}{n}
\def\gA{{\mathcal{A}}}
\def\gH{{\mathcal{H}}}
\def\gN{{\mathcal{N}}}
\def\gO{{\mathcal{O}}}
\def\gP{{\mathcal{P}}}
\def\gR{{\mathcal{R}}}
\def\gS{{\mathcal{S}}}
\newcommand{\E}{\mathbb{E}}
\DeclareMathOperator*{\argmax}{arg\,max}
\title{Maximum Entropy RL (Provably) Solves Some Robust RL Problems}
\author{%
  Benjamin Eysenbach \\
  Carnegie Mellon University, Google Brain \\
  \texttt{beysenba@cs.cmu.edu} \\
  \And
  Sergey Levine \\
  UC Berkeley, Google Brain
}
\begin{document}

\maketitle

\begin{abstract}

Many potential applications of reinforcement learning (RL) require guarantees that the agent will perform well in the face of disturbances to the dynamics or reward function. In this paper, we prove theoretically that maximum entropy (MaxEnt) RL maximizes a lower bound on a robust RL objective, and thus can be used to learn policies that are robust to some disturbances in the dynamics and the reward function. While this capability of MaxEnt RL has been observed empirically in prior work, to the best of our knowledge our work provides the first rigorous proof and theoretical characterization of the MaxEnt RL robust set. While a number of prior robust RL algorithms have been designed to handle similar disturbances to the reward function or dynamics, these methods typically require additional moving parts and hyperparameters on top of a base RL algorithm. In contrast, our results suggest that MaxEnt RL by itself is robust to certain disturbances, without requiring any additional modifications. While this does not imply that MaxEnt RL is the best available robust RL method, MaxEnt RL is a simple robust RL method with appealing formal guarantees.

\end{abstract}

\vspace{-0.5em}
\section{Introduction}
\vspace{-0.5em}

\begin{wrapfigure}[19]{R}{0.5\textwidth}
    \vspace{-1em}
    \centering
    \begin{subfigure}[c]{0.2\linewidth}
        \includegraphics[width=\linewidth]{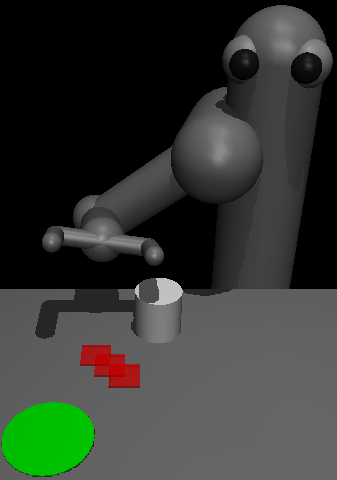}
    \end{subfigure}%
    ~
    \begin{subfigure}[c]{0.78\linewidth}
        \includegraphics[width=\linewidth]{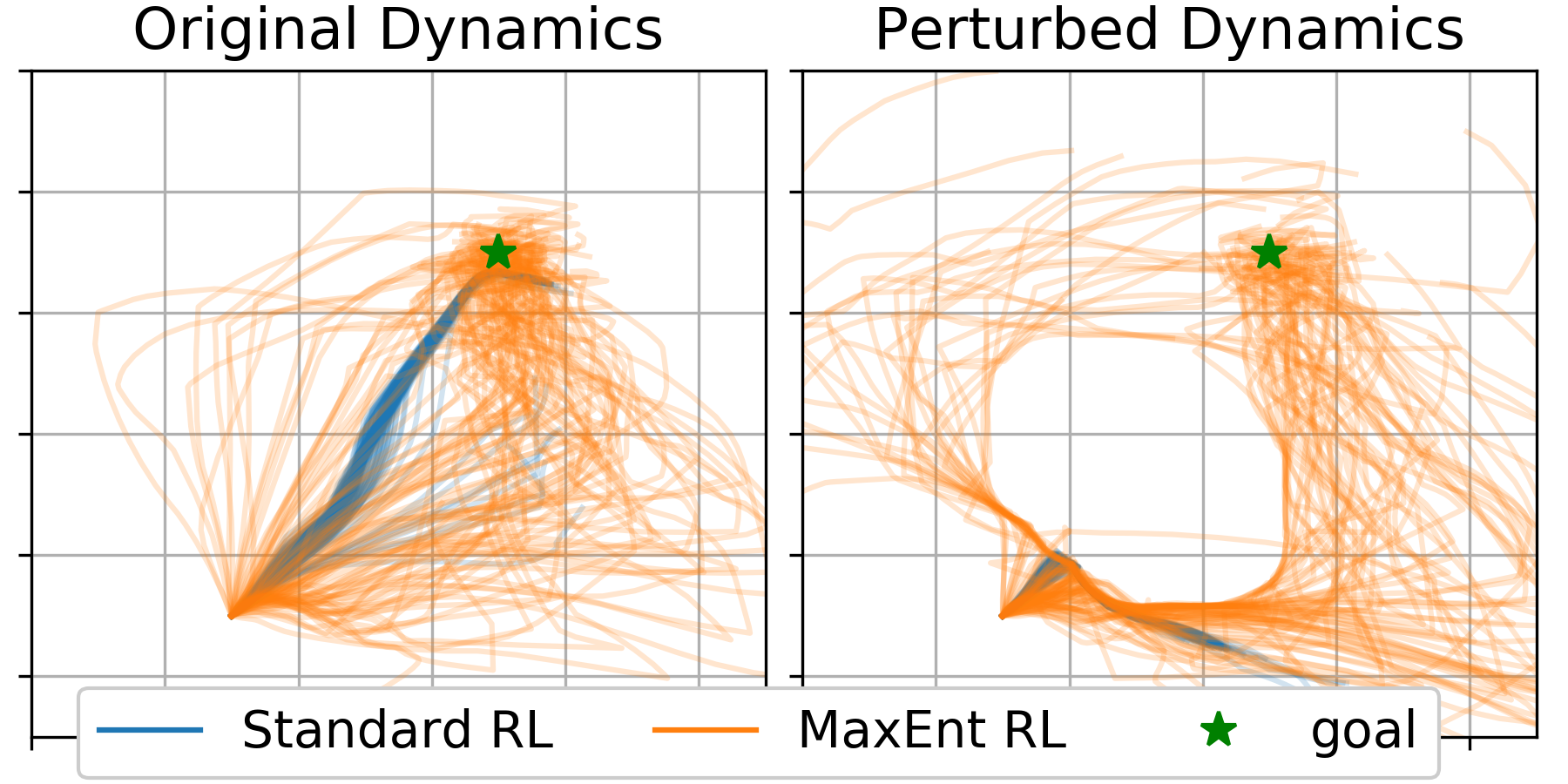}
    \end{subfigure}
    \caption{{\footnotesize \textbf{MaxEnt RL is robust to disturbances.} \figleft \, We applied both standard RL and MaxEnt RL to a manipulation task without obstacles, but added obstacles (red squares) during evaluation. We then plot the position of the object when evaluating the learned policies \figcenter\, on the original environment and \figright\,  on the new environment with an obstacle. The stochastic policy learned by MaxEnt RL often navigates around the obstacle, whereas the deterministic policy from standard RL almost always collides with the obstacle. \label{fig:teaser}}}
\end{wrapfigure}

Many real-world applications of reinforcement learning (RL) require control policies that not only maximize reward but continue to do so when faced with environmental disturbances, modeling errors, or errors in reward specification. These disturbances can arise from human biases and modeling errors, from non-stationary aspects of the environment, or actual adversaries acting in the environment. A number of works have studied how to train RL algorithms to be robust to disturbances in the environment (e.g., ~\citet{morimoto2005robust, pinto2017robust, tessler2019action}). However, designing robust RL algorithms requires care, typically requiring an adversarial optimization problem and introducing additional hyperparameters~\citep{pinto2017robust, tessler2019action}. Instead of designing a new robust RL algorithm, we will instead analyze whether an existing RL method, MaxEnt RL, offers robustness to perturbations. MaxEnt RL methods are based on the maximum entropy principle, and augmented the expected reward objective with an entropy maximization term~\citep{ziebart2008maximum}. Prior work has conjectured that such algorithms should learn robust policies~\citep{haarnoja2017reinforcement, huang2019svqn}, on account of the close connection between maximum entropy methods and robustness in supervised learning domains~\citep{grunwald2004game, ziebart2010modeling}. However, despite decades of research on maximum entropy RL methods~\citep{kappen2005path,  todorov2007linearly, toussaint2009robot, theodorou2010generalized}, a formal characterization of the robustness properties of such approaches has proven elusive. To our knowledge, no prior work formally proves that MaxEnt RL methods are robust, and no prior work characterizes the disturbances against which these methods are robust.
Showing how to obtain robust policies from existing MaxEnt RL methods, which already constitute a significant portion of the RL methods in use today~\citep{abdolmaleki2018maximum, haarnoja2018soft, vieillard2020munchausen}, would be useful because it would enable practitioners to leverage existing, tried-and-true methods to solve robust RL problems.

Stochastic policies, of the sort learned with MaxEnt RL, inject noise into the actions during training, thereby preparing themselves for deployment in environments with disturbances.
For example, in the robot pushing task shown in Fig.~\ref{fig:teaser}, the policy learned by MaxEnt RL pushes the white puck to the goal using many possible routes. In contrast, (standard) RL learns a deterministic policy, always using the same route to get to the goal. Now, imagine that this environment is perturbed by adding the red barrier in Fig.~\ref{fig:teaser}. While the policy learned by (standard) RL always collides with this obstacle, the policy learned by MaxEnt RL uses many routes to solve the task, and some fraction of these routes continue to solve the task even when the obstacle is present.
While a number of prior works have articulated the intuition that the stochastic policies learned via MaxEnt RL should be robust to disturbances~\citep{levine2018reinforcement, abdolmaleki2018maximum, lee2019tsallis}, no prior work has actually shown that MaxEnt RL policies are provably robust, nor characterized the \emph{set} of disturbances to which they are robust.
Applications of MaxEnt RL methods to problems that demand robustness are likely hampered by a lack of understanding of \emph{when} such methods are robust, what kinds of reward functions should be used to obtain the desired type of robustness, and how the task should be set up.
The goal in our work is to make this notion precise, proving that MaxEnt RL is already a robust RL algorithm, and deriving the robust set for these policies.

The main contribution of this paper is a theoretical characterization of the robustness of existing MaxEnt RL methods.
Specifically, we show that the MaxEnt RL objective is a nontrivial \emph{lower bound} on the robust RL objective, for a particular robust set.
Importantly, these two objectives use different reward functions: to learn a policy that maximizes a reward function on a wide range of dynamics functions, one should apply MaxEnt RL to a \emph{pessimistic} version of that reward function.
MaxEnt RL is \emph{not} robust with respect to the reward function it is trained on.
As we will show, the robust set for these methods is non-trivial, and in some important special cases can be quite intuitive.
To our knowledge, our results are the first to formally show robustness of standard MaxEnt RL methods to dynamics perturbations and characterize their robust set.
Importantly, our analysis and experiments highlight that robustness is only achieved for relatively large entropy coefficients.
We validate our theoretical results on a set of illustrative empirical experiments.

\vspace{-0.5em}
\section{Related Work}
\label{sec:related-work}
\vspace{-0.5em}

The study of robustness in controls has a long history, with robust or $H_\infty$ control methods proposing provably robust solutions under various assumptions on the true dynamics~\citep{zhou1996robust, doyle2013feedback}. RL offers an appealing alternative, since robust RL methods can in principle learn robust policies \emph{without} knowing the true dynamics. However, policies learned by standard RL algorithms often flounder in the face of environmental disturbances~\citep{rajeswaran2016epopt, peng2018sim}.
The problem of learning robust policies that achieve high reward in the face of \emph{adversarial} environmental disturbances, often called \emph{robust RL}, has been well studied in the literature~\citep{bagnell2001solving, nilim2003robustness, morimoto2005robust, pinto2017robust, tessler2019action, russel2019beyond, kamalaruban2020robust, russel2020entropic, russel2021lyapunov, derman2021twice}. Prior robust RL methods are widely-applicable but often require many additional hyperparameters or components. For example, ~\citet{bagnell2001solving} modify the Bellman backup in a Q-learning by solving a convex optimization problem in an inner loop, and ~\citet{tessler2019action} trains an additional adversary policy via RL.

Robust RL is different from robust control in the traditional sense, which focuses on stability independent of any reward function~\citep{zhou1996robust, doyle2013feedback}.
Robust RL problem involves estimating a policy's returns under ``similar'' MDPs, a problem that has been studied in terms of distance metrics for MDPs~\citep{lecarpentier2020lipschitz}.
Robust RL is different from maximizing the \emph{average} reward across many environments, as done by methods such as domain randomization~\citep{sadeghi2016cad2rl, rajeswaran2016epopt, peng2018sim}.
Closely related to robust RL are prior methods that are robust to disturbances in the reward function~\citep{hadfield2017inverse, bobu2020less, michaud2020understanding}, or aim to minimize a cost function in addition to maximizing reward~\citep{chow2017risk, achiam2017constrained, chow2019lyapunov, carrara2019budgeted, tang2019worst, thananjeyan2020recovery}.
Robust RL is also different from the problem of learning transferable or generalizable RL agents~\citep{lazaric2012transfer, zhang2018study, justesen2018illuminating, cobbe2019quantifying}, which focuses on the average-case performance on new environments, rather than the worst-case performance.

\vspace{-0.5em}
\section{Preliminaries}
\label{sec:preliminaries}
\vspace{-0.5em}

We assume that an agent observes states $\cs$, takes actions $\ca \sim \pi(\ca \mid \cs)$, and obtains rewards $r(\cs, \ca)$.
The initial state is sampled \mbox{$\mathbf{s_1} \sim p_1(\mathbf{s_1})$}, and subsequent states are sampled \mbox{$\ns \sim p(\ns \mid \cs, \ca)$}. We will use $p^\pi(\tau)$ to denote the probability distribution over trajectories for policy $\pi$. Episodes have $T$ steps, which we summarize as a trajectory \mbox{$\tau \triangleq (\mathbf{s_1}, \mathbf{a_1}, \cdots, \mathbf{s_T}, \mathbf{a_T})$}. 
Without loss of generality, we can assume that rewards are undiscounted, as any discount can be addressed by modifying the dynamics to transition to an absorbing state with probability $1 - \gamma$.
The standard RL objective~is:
\begin{equation*}
    \argmax_\pi \;\E_{\tau \sim p^\pi(\tau)}\bigg[ \sum_{t=1}^T r(\cs, \ca) \bigg], \quad \text{where} \quad p^\pi(\tau) = p_1(\mathbf{s_1}) \prod_{t=1}^T p(\ns \mid \cs, \ca) \pi(\ca \mid \cs) d \tau
\end{equation*}
is the distribution over trajectories when using policy $\pi$.
In fully observed MDPs, there always exists a deterministic policy as a solution~\citep{puterman2014markov}.
The MaxEnt RL objective is to maximize the sum of expected reward and conditional action entropy. 
Formally, we define this objective in terms of a policy $\pi$, the dynamics $p$ and the reward function $r$:
\begin{equation}
J_\text{MaxEnt}(\pi; p, r) \triangleq \E_{\ca \sim \pi(\ca \mid \cs), \ns \sim p(\ns \mid \cs, \ca)} \bigg[ \sum_{t=1}^T r(\cs, \ca) + \alpha \gH_\pi[\ca \mid \cs] \bigg], \label{eq:maxent-obj}
\end{equation}
where $\gH_\pi[\ca \mid \cs] = \int_{\gA} \pi(\ca \mid \cs) \log \frac{1}{\pi(\ca \mid \cs)} d\ca$ denotes the entropy of the action distribution. The \emph{entropy coefficient} $\alpha$ balances the reward term and the entropy term; we use $\alpha = 1$ in our analysis. As noted above, the discount factor is omitted because it can be subsumed into the dynamics.

Our main result will be that maximizing the MaxEnt RL objective (Eq.~\ref{eq:maxent-obj}) results in robust policies.
We quantify robustness by measuring the reward of a policy when evaluated on a \emph{new} reward function $\tilde{r}$ or dynamics function $\tilde{p}$, which is chosen adversarially from some set:
\begin{equation*}
    \max_\pi \min_{\tilde{p} \in \tilde{\gP}, \tilde{r} \in \tilde{\gR}} \E_{\tilde{p}(\ns \mid \cs, \ca), \pi(\ca \mid \cs)}\bigg[ \sum_{t=1}^T \tilde{r}(\cs, \ca) \bigg].
\end{equation*}
This \emph{robust RL} objective is defined in terms of the sets of dynamics $\tilde{\gP}$ and reward functions $\tilde{\gR}$. Our goal is to characterize these~sets. The robust RL objective can be interpreted as a two-player, zero-sum game. The aim is to find a Nash equilibrium.
Our goal is to prove that MaxEnt RL (with a \emph{different reward function}) optimizes a lower bound this robust objective, and to characterize the robust sets $\tilde{\gP}$ and $\tilde{\gR}$ for which this bound holds.

\section{MaxEnt RL and Robust Control}
\label{sec:robust-control}

In this section, we prove the conjecture that MaxEnt RL is robust to disturbances in the environment.
Our main result is that MaxEnt RL can be used to maximize a lower bound on a certain robust RL objective. Importantly, doing this requires that MaxEnt RL be applied to a different, pessimistic, version of the target reward function.
Before presenting our main result, we prove that MaxEnt RL is robust against disturbances to the reward function. This result is a simple extension of prior work, and we will use this result for proving our main result about dynamics robustness

\subsection{Robustness to Adversarial Reward Functions}
\label{sec:robust-control-rewards}

We first show that MaxEnt RL is robust to some degree of misspecification in the reward function. This result may be useful in practical settings with learned reward functions~\citep{fu2018variational,xu2019positive, michaud2020understanding} or misspecified reward function~\citep{amodei2016concrete, clark2016faulty}.
Precisely, the following result will show that applying MaxEnt RL to one reward function, $r(\cs, \ca)$, results in a policy that is guaranteed to also achieve high return on a range of other reward functions, $\tilde{r}(\cs, \ca) \in \tilde{\gR}$:
\begin{theorem} \label{thm:reward-robustness}
Let dynamics $p(\ns \mid \cs, \ca)$, policy $\pi(\ca \mid \cs)$, and reward function $r(\cs, \ca)$ be given. Assume that the reward function is finite and that the policy has support everywhere (i.e., $\pi(\ca \mid \cs) > 0$ for all states and actions). Then there exists a positive constant $\epsilon>0$ such that the MaxEnt RL objective $J_\text{MaxEnt}$ is equivalent to the robust RL objective defined by the robust set~$\tilde{\gR}(\pi)$:
\begin{equation*}
 \min_{\tilde{r} \in \tilde{\gR}(\pi)} \E \Big[\sum_t \tilde{r}(\cs, \ca) \Big] = J_\text{MaxEnt}(\pi; p, r) \quad \forall \pi,
\end{equation*}
where the adversary chooses a reward function from the set
\begin{equation}
     \tilde{\gR}(\pi) \triangleq \left\{ \tilde{r}(\cs, \ca) \; \bigg \vert \; \E_\pi \Big[\sum_t \log \int_{\gA} \exp(r(\cs, \ca') - \tilde{r}(\cs, \ca')) d\ca'\Big] \le \epsilon \right\}. \label{eq:robust-set-rewards}
\end{equation}
\end{theorem}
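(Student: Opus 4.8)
The plan is to prove the identity by sandwiching $\min_{\tilde r \in \tilde{\gR}(\pi)} \E_\pi[\sum_t \tilde r(\cs,\ca)]$ between a lower bound that holds for \emph{every} admissible $\tilde r$ and a matching upper bound produced by exhibiting one explicit ``worst-case'' reward. The single tool driving both directions is the Gibbs variational principle (equivalently, Jensen's inequality for $\log$): for any state $\cs$ and any $h(\cs,\cdot)$ with $\int_\gA e^{h(\cs,\ca')}d\ca' < \infty$,
\begin{equation*}
  \log \int_\gA \exp\big(h(\cs,\ca')\big)\, d\ca' \;\ge\; \E_{\ca \sim \pi(\ca\mid\cs)}\big[h(\cs,\ca)\big] + \gH_\pi[\ca\mid\cs],
\end{equation*}
with equality exactly when $h(\cs,\ca') = \log \pi(\ca'\mid\cs) + c(\cs)$ for some shift $c(\cs)$ not depending on $\ca'$. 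The full-support hypothesis on $\pi$ makes $\gH_\pi[\ca\mid\cs]$ and $\log\pi(\ca'\mid\cs)$ finite and the equality case attainable, while finiteness of $r$ ensures the candidate reward below is a legitimate finite reward function.

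\textbf{Lower bound.} First I would fix an arbitrary $\tilde r \in \tilde{\gR}(\pi)$ and apply the inequality above with $h = r - \tilde r$, pointwise in $\cs$. Summing over $t$, taking the trajectory expectation under $p^\pi(\tau)$, and invoking the defining constraint of $\tilde{\gR}(\pi)$ in \eqref{eq:robust-set-rewards} gives
\begin{equation*}
  \epsilon \;\ge\; \E_\pi\Big[\textstyle\sum_t (r - \tilde r)\Big] + \E_\pi\Big[\textstyle\sum_t \gH_\pi[\ca\mid\cs]\Big] \;=\; J_\text{MaxEnt}(\pi; p, r) - \E_\pi\Big[\textstyle\sum_t \tilde r(\cs,\ca)\Big].
\end{equation*}
Rearranging yields $\E_\pi[\sum_t \tilde r(\cs,\ca)] \ge J_\text{MaxEnt}(\pi; p, r) - \epsilon$ for every $\tilde r \in \tilde{\gR}(\pi)$, hence the same bound holds for the infimum over the robust set.

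\textbf{Matching upper bound.} Next I would show this value is essentially attained. Guided by the equality case of the variational principle, set $\tilde r^\star(\cs,\ca) \triangleq r(\cs,\ca) - \log\pi(\ca\mid\cs) + c$ with the constant $c$ chosen to saturate the constraint. Since $\int_\gA \pi(\ca'\mid\cs)d\ca' = 1$, the inner integral collapses, $\log\int_\gA \exp(r - \tilde r^\star)d\ca' = \log\int_\gA \exp(\log\pi(\ca'\mid\cs) - c)d\ca' = -c$, so $\tilde r^\star \in \tilde{\gR}(\pi)$ precisely when $c \ge -\epsilon/T$; taking $c = -\epsilon/T$ and using $-\E_{\ca\sim\pi}[\log\pi(\ca\mid\cs)] = \gH_\pi[\ca\mid\cs]$ gives $\E_\pi[\sum_t \tilde r^\star(\cs,\ca)] = J_\text{MaxEnt}(\pi; p, r) - \epsilon$ (more generally any state-dependent shift $c(\cs)$ with $\E_\pi[\sum_t c(\cs)] = -\epsilon$ does the same). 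Combined with the lower bound, the infimum equals $J_\text{MaxEnt}(\pi; p, r)$ up to this additive constant $\epsilon$ that is independent of $\pi$, so the two objectives define the same maximization problem over $\pi$ — which is the asserted equivalence, and the shifted reward $\tilde r^\star$ is the adversary's optimal play.

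\textbf{Anticipated obstacle.} I do not expect a substantive obstacle; the argument is largely bookkeeping. The points that need care are (i) measurability and integrability, so that the pointwise Jensen inequality may be integrated against the time-$t$ state occupancies and the $\sum_t$ interchanged with $\E_\pi$ — this is where finiteness of $r$ and $\pi(\ca\mid\cs)>0$ enter; (ii) justifying that it suffices to consider $\tilde r^\star$, which is exactly what the two-sided bound certifies, with the reassuring consistency check that the adversary's optimizer is precisely the reward making Jensen tight, namely $r - \log\pi$ up to a state-only shift; and (iii) the cosmetic $\epsilon$ gap between the displayed identity and the bounds, which is immaterial downstream since $\epsilon$ is a fixed constant not depending on the policy.
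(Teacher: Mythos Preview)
Your argument is correct. Both your proof and the paper's rest on the same identity --- the Fenchel/Gibbs duality between conditional entropy and log-sum-exp (the paper isolates this as a lemma) --- but you package the reduction from the constrained minimization to that identity differently. The paper invokes the KKT conditions abstractly: since the inner problem in $\tilde r$ is convex and Slater's condition holds, there is an $\epsilon$ for which the constrained minimum coincides with the Lagrangian-relaxed problem at multiplier $\lambda=1$, and that relaxed problem is exactly the Fenchel dual, yielding $J_\text{MaxEnt}$. You instead sandwich the constrained minimum directly: Jensen gives the lower bound for every feasible $\tilde r$, and the explicit witness $\tilde r^\star = r - \log\pi - \epsilon/T$ saturates it. What your route buys is (i) no appeal to convex-optimization machinery, (ii) an explicit adversarial optimizer, and (iii) a transparent accounting of the additive $\epsilon$ offset between the constrained value and $J_\text{MaxEnt}$, which the paper's presentation elides; your approach also makes it immediately clear that the identity holds for every $\epsilon>0$ uniformly in $\pi$, whereas the KKT phrasing leaves that uniformity to be checked.
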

Thus, when we use MaxEnt RL with some reward function $r$, the policy obtained is guaranteed to also obtain high reward on all similar reward functions $\tilde{r}$ that satisfy Eq.~\ref{eq:robust-set-rewards}.
We discuss how MaxEnt RL can be used to learn policies robust to arbitrary sets of reward functions in Appendix~\ref{sec:reducing-robust-control}. The proof can be found in Appendix~\ref{appendix:reward-robustness-proof}, and is a simple extension of prior work~\citep{grunwald2004game, ziebart2011maximum}. Our proof does not assume that the reward function is bounded nor that the policy is convex.
While the proof is straightforward, it will be useful as an intermediate step when proving robustness to dynamics in the next subsection.

The robust set $\tilde{\gR}$ corresponds to reward functions $\tilde{r}$ that are not too much smaller than the original reward function: the adversary cannot decrease the reward for any state or action too much. 
The robust set $\tilde{\gR}$ depends on the policy, so the adversary has the capability of looking at which states the policy visits before choosing the adversarial reward function $\tilde{r}$. For example, the adversary may choose to apply larger perturbations at states and actions that the agent frequently visits.

\subsection{Robustness to Adversarial Dynamics}
\label{sec:robust-control-dynamics}

We now show that MaxEnt RL learns policies that are robust to perturbations to the dynamics. Importantly, to have MaxEnt RL learn policies that robustly maximize one a reward function, we will apply MaxEnt RL to a different, pessimistic reward function:
\begin{equation}
    \bar{r}(\cs, \ca, \ns) \triangleq \frac{1}{T} \log r(\cs, \ca) + \gH[\ns \mid \cs, \ca]. \label{eq:bar-r}
\end{equation}
The $\log(\cdot)$ transformation is common in prior work on learning risk-averse policies~\citep{mihatsch2002risk}. The entropy term rewards the policy for vising states that have stochastic dynamics, which should make the policy harder for the adversary to exploit.
In environments that have the same stochasticity at every state (e.g., LG dynamics), this entropy term becomes a constant and can be ignored. In more general settings, computing this pessimistic reward function would require some knowledge of the dynamics. Despite this limitation, we believe that our results may be of theoretical interest, taking a step towards explaining the empirically-observed robustness properties of MaxEnt~RL.

To formally state our main result, we must define the range of ``similar'' dynamics functions against which the policy will be robust. We use the following divergence between two dynamics functions:
\begin{equation}
    d(p, \tilde{p}; \tau) \triangleq \sum_{\cs \in \tau} \log \iint_{\gA\times\gS} \frac{p(\ns' \mid \cs, \ca')}{\tilde{p}(\ns' \mid \cs, \ca')} d \ca' d\ns'. \label{eq:d}
\end{equation}
This divergence is large when the adversary's dynamics $\tilde{p}(\ns \mid \cs, \ca)$ assign low probability to a transition with high probability in the training environment $\tilde{p}$.
Our main result shows that applying MaxEnt RL to the pessimistic reward function results in a policy that is robust to these similar dynamics functions:
\begin{theorem} \label{thm:dynamics-robustness}
Let an MDP with dynamics $p(\ns \mid \cs, \ca)$ and reward function $r(\cs, \ca) > 0$ be given. Assume that the dynamics have finite entropy (i.e., $\gH[\ns \mid \cs, \ca]$ is finite for all states and actions). Then there exists a constant $\epsilon > \gH_{\pi^*}[\ca \mid \cs]$
such that the MaxEnt RL objective with dynamics $p$ and reward function $\bar{r}$ is a lower bound on the robust RL objective
\begin{equation*}
    \min_{\tilde{p} \in \tilde{\gP}(\pi)} J_\text{MaxEnt}(\pi; \tilde{p}, r) \ge \exp(J_\text{MaxEnt}(\pi; p, \bar{r}) + \log T),
\end{equation*}
where the robust set is defined as
\begin{equation}
    \tilde{\gP} \triangleq \bigg\{ \tilde{p}(\mathbf{s}' \mid \mathbf{s}, \mathbf{a}) \; \bigg \vert \; \E_{\substack{\pi(\ca \mid \cs)\\ p(\ns \mid \cs, \ca)}} \left[ d(p, \tilde{p}; \tau) \right] \le \epsilon \bigg\}. \label{eq:dynamics-robust-set}
\end{equation}
\end{theorem}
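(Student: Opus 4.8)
The plan is to reduce the dynamics‑robustness claim to the reward‑robustness result (Theorem~\ref{thm:reward-robustness}): I will show that any admissible dynamics adversary $\tilde p \in \tilde{\gP}(\pi)$ can be simulated by an admissible reward adversary against the pessimistic reward $\bar r$, after passing to an exponentiated (log‑return) form of the objective. The point of using $\bar r = \tfrac1T\log r + \gH[\ns\mid\cs,\ca]$ in Eq.~\ref{eq:bar-r} is precisely that maximizing $J_\text{MaxEnt}(\pi;p,\bar r)$ maximizes a lower bound on the \emph{logarithm} of the original return (normalized by $T$), so the outer $\exp(\cdot)$ and the additive $\log T$ in the statement merely undo this transformation. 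Concretely, it suffices to prove $\log J_\text{MaxEnt}(\pi;\tilde p, r) \ge \log T + J_\text{MaxEnt}(\pi;p,\bar r)$ for every $\tilde p \in \tilde{\gP}(\pi)$, and then exponentiate and take the minimum over $\tilde p$.

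The engine of the argument is a transition‑level change‑of‑measure bound: for any nonnegative value $e^{V(\cdot)}$,
\begin{equation*}
\log \E_{\ns \sim \tilde p(\cdot \mid \cs, \ca)}\!\big[e^{V(\ns)}\big] \;\ge\; \E_{\ns \sim p(\cdot \mid \cs, \ca)}\big[V(\ns)\big] \;+\; \gH[\ns \mid \cs, \ca] \;-\; \log \int_{\gS} \frac{p(\ns' \mid \cs, \ca)}{\tilde p(\ns' \mid \cs, \ca)}\, d\ns' .
\end{equation*}
This follows from Cauchy--Schwarz with the pairing $f = \sqrt{\tilde p}\,e^{V/2}$, $g=\sqrt{p/\tilde p}$ (which is what produces the ratio $p/\tilde p$ appearing in the divergence $d$ of Eq.~\ref{eq:d}), followed by Jensen's inequality for $\exp$ applied to $\E_{p}[\tfrac12 V - \tfrac12\log p]$ (which is what produces the dynamics‑entropy term, and is exactly why $\bar r$ must carry $\gH[\ns\mid\cs,\ca]$). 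Reading this as a soft‑Bellman backup and unrolling it over all $T$ steps by backward induction, while using the AM--GM step $\sum_{t=1}^T r(\cs,\ca) \ge T\exp\!\big(\tfrac1T\sum_t\log r(\cs,\ca)\big)$ to move the additive return into exponentiated form (contributing the factor $T$), shows that running $\pi$ under the adversarial dynamics $\tilde p$ with reward $r$ is lower‑bounded by $T\exp$ of running $\pi$ under the true dynamics $p$ with the pessimistic reward $\hat r$ obtained from $\bar r$ by subtracting the per‑state‑action penalty $\log\int_\gS p(\ns'\mid\cs,\ca)/\tilde p(\ns'\mid\cs,\ca)\,d\ns'$. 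The penalties accumulated along the induction are exactly $\sum_{\cs\in\tau}\log\iint_{\gA\times\gS}\frac{p}{\tilde p}\,d\ca'\,d\ns' = d(p,\tilde p;\tau)$ once the action integral from the reward‑robust set of Theorem~\ref{thm:reward-robustness} is taken into account, so the constraint $\E_{p,\pi}[d(p,\tilde p;\tau)]\le\epsilon$ defining $\tilde{\gP}(\pi)$ places $\hat r$ inside the reward‑robust set $\tilde{\gR}(\pi)$ around $\bar r$; Theorem~\ref{thm:reward-robustness} then yields $\E_{p,\pi}[\sum_t\hat r]\ge J_\text{MaxEnt}(\pi;p,\bar r)$, which chains with the previous bound. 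A final Jensen step for $\exp$ at the initial‑state distribution closes the small gap between $\exp$ of an expectation and the expectation of $\exp$. The requirement $\epsilon > \gH_{\pi^*}[\ca\mid\cs]$ enters because $d(p,\tilde p;\tau)$ does not vanish at $\tilde p = p$ — its floor is governed by the (log‑volume of the action space, hence the) entropy of the maximum‑entropy action distribution $\pi^*$ — so the budget must exceed this floor for $\tilde{\gP}(\pi)$ to contain dynamics other than $p$ itself.

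The main obstacle is the middle step: one must select exactly the right two‑step inequality so that the quantity that falls out is precisely the divergence $d$ of Eq.~\ref{eq:d} (rather than a $\chi^2$‑ or $\KL$‑type divergence), and then execute the backward induction so that (i) the per‑transition penalties telescope into $d(p,\tilde p;\tau)$, (ii) the policy‑entropy bonus of $J_\text{MaxEnt}$ is carried through the recursion consistently, (iii) the AM--GM conversion of the return produces exactly the factor $T$, and (iv) the induced reward perturbation $\hat r$ provably lands in the robust set of Theorem~\ref{thm:reward-robustness} with the \emph{same} constant $\epsilon$. Making all of these align — in particular getting every inequality to point the right way after the exponentiation — is the delicate part; everything else is bookkeeping.
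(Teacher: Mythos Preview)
Your plan is sound and leads to the stated bound, but it takes a genuinely different route from the paper's proof. The paper works at the \emph{trajectory level}: it rewrites $\E_{\tilde p}[\sum_t r]$ via importance weights $\prod_t \tilde p/p$, applies Jensen once to pull the logarithm inside the expectation, and a second time (your AM--GM step) to convert $\log\sum_t r$ to $\tfrac1T\sum_t\log r$. This produces a penalty $\Delta r(\cs,\ca,\ns)=\log p-\log\tilde p$ that depends on $\ns$, so the paper cannot invoke Theorem~\ref{thm:reward-robustness} directly; instead it re-runs the KKT/Fenchel argument over the joint variable $(\ca,\ns)$, and the policy and dynamics entropies emerge together from the Fenchel dual of the log-sum-exp over $\gA\times\gS$.

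Your Cauchy--Schwarz step is the key structural difference: by pairing $\sqrt{\tilde p}\,e^{V/2}$ with $\sqrt{p/\tilde p}$ and following with Jensen for $\exp$, you collapse the $\ns$-dependence so that the per-step penalty $\log\int_{\gS} p/\tilde p\,d\ns'$ is a function of $(\cs,\ca)$ only. This is exactly what makes the black-box reduction to Theorem~\ref{thm:reward-robustness} possible, and the computation $\log\int_\gA e^{\bar r-\hat r}\,d\ca' = \log\iint_{\gA\times\gS} p/\tilde p$ then recovers the divergence $d$ of Eq.~\ref{eq:d} on the nose. The dynamics entropy is harvested by your Cauchy--Schwarz/Jensen bound, and the policy entropy is supplied by Theorem~\ref{thm:reward-robustness}; in the paper both come from the single Fenchel step. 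What the paper buys is brevity (no backward induction, one Jensen at the trajectory level) and a direct handle on $\epsilon$ via duality, which feeds straight into Lemma~\ref{lemma-epsilon}. What your approach buys is modularity --- Theorem~\ref{thm:reward-robustness} is used as a lemma rather than re-proved --- and it makes explicit that the $\epsilon$ in $\tilde{\gP}(\pi)$ is precisely the $\epsilon$ that Theorem~\ref{thm:reward-robustness} produces for the base reward $\bar r$; you should state this identification clearly, since it is what pins down the constant. The backward induction and the final Jensen over $p_1$ are routine once you set $W_t(\cs)=\log\E_{\pi,\tilde p}[\exp(\sum_{t'\ge t}\tfrac1T\log r)\mid\cs]$, but do write them out: the inequalities all point the right way, yet the boundary at $t=T$ and the monotonicity step $W_{t+1}\ge\hat W_{t+1}\Rightarrow W_t\ge\hat W_t$ deserve a line each.
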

In defining the robust set, the adversary chooses a dynamics function from the robust set $\tilde{\gP}$ independently at each time step; our next result (Lemma~\ref{lemma-epsilon}) will describe the value of $\epsilon$ in more detail.
The proof, presented in Appendix~\ref{appendix:dynamics-robustness-proof}, first shows the robustness to reward perturbations implies robustness to dynamics perturbations and then invokes Theorem~\ref{thm:reward-robustness} to show that MaxEnt RL learns policies that are robust to reward perturbations. 

This result can be interpreted in two ways. First, if a user wants to acquire a policy that optimizes a reward under many possible dynamics, we should run MaxEnt RL with a specific, pessimistic reward function $\bar{r}(\cs, \ca, \ns)$. This pessimistic reward function depends on the environment dynamics, so it may be hard to compute without prior knowledge   or a good model of the dynamics. However, in some settings (e.g., dynamics with constant additive noise), we might assume that $\gH[\ns \mid \cs, \ca]$ is approximately constant, in which case we simply set the MaxEnt RL reward to $\bar{r}(\cs, \ca, \ns) = \log r(\cs, \ca)$. 
Second, this result says that every time a user applies MaxEnt RL, they are (implicitly) solving a robust RL problem, one defined in terms of a different reward function. This connection may help explain why prior work has found that the policies learned by MaxEnt RL tend to be robust against disturbances to the environment~\citep{haarnoja2018learning}.

Theorem~\ref{thm:dynamics-robustness} relates one MaxEnt RL problem to another (robust) MaxEnt RL problem. We can also show that MaxEnt RL maximizes a lower bound on an \emph{unregularized} RL problem.
\begin{corollary}\label{thm:combined}
Under the same assumptions as Theorem~\ref{thm:reward-robustness} and~\ref{thm:dynamics-robustness}, the MaxEnt RL problem is a lower bound on the robust RL objective:
\begin{equation*}
        \min_{\tilde{p} \in \tilde{\gP}(\pi), \tilde{r} \in \tilde{\gR}(\pi)} \E_{\tilde{p}(\ns \mid \cs, \ca), \pi(\ca \mid \cs)} \left[ \sum_t \tilde{r}(\cs, \ca) \right] \ge \exp(J_\text{MaxEnt}(\pi; p, \bar{r}) + \log T),
\end{equation*}
where the robust sets $\tilde{\gP}(\pi)$ and $\tilde{\gR}(\pi)$ as defined as in Theorem~\ref{thm:reward-robustness} and~\ref{thm:dynamics-robustness}.
\end{corollary}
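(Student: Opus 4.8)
The plan is to obtain the corollary as a direct composition of Theorem~\ref{thm:dynamics-robustness}, which absorbs an adversary over the dynamics, and Theorem~\ref{thm:reward-robustness}, which absorbs an adversary over the reward, peeling the two adversaries off one at a time. Since the inner objective is minimized jointly over $\tilde{p}$ and $\tilde{r}$, I would first rewrite the joint minimization as a nested one, $\min_{\tilde{p} \in \tilde{\gP}(\pi)} \min_{\tilde{r} \in \tilde{\gR}(\pi)} \E_{\tilde{p}, \pi}[\sum_t \tilde{r}(\cs,\ca)]$, and treat the two layers separately.

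For the inner (reward) layer I would apply Theorem~\ref{thm:reward-robustness} with the dynamics instantiated to the fixed adversarial choice $\tilde{p} \in \tilde{\gP}(\pi)$ instead of the nominal $p$; its hypotheses (finite reward, full-support policy) are exactly the corollary's standing assumptions and impose no restriction on $\tilde{p}$. This rewrites the inner minimum as the entropy-regularized MaxEnt value under $\tilde{p}$, namely $\min_{\tilde{r} \in \tilde{\gR}(\pi)} \E_{\tilde{p}, \pi}[\sum_t \tilde{r}(\cs,\ca)] = J_\text{MaxEnt}(\pi; \tilde{p}, r)$. Taking the minimum over $\tilde{p} \in \tilde{\gP}(\pi)$ and then invoking the dynamics bound $\min_{\tilde{p} \in \tilde{\gP}(\pi)} J_\text{MaxEnt}(\pi; \tilde{p}, r) \ge \exp(J_\text{MaxEnt}(\pi; p, \bar{r}) + \log T)$ from Theorem~\ref{thm:dynamics-robustness} yields the stated inequality, and recombining the nested minima into the joint minimum over $(\tilde{p},\tilde{r})$ finishes the argument.

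The step I expect to be the main obstacle is applying Theorem~\ref{thm:reward-robustness} underneath the minimization over $\tilde{p}$: the reward robust set of Eq.~\ref{eq:robust-set-rewards} and its slack $\epsilon$ are defined through an expectation over trajectories, which in the chained argument is naturally taken under the adversarial dynamics $\tilde{p}$ rather than the nominal $p$. To keep a single, dynamics-independent robust set $\tilde{\gR}(\pi)$ in the statement, I would either let $\tilde{\gR}$ carry a dependence on $\tilde{p}$ (reading $\tilde{\gR}(\pi)$ in the corollary as that $\tilde{p}$-indexed family), or replace it by $\bigcap_{\tilde{p} \in \tilde{\gP}(\pi)} \tilde{\gR}_{\tilde{p}}(\pi)$ and use monotonicity of the inner $\min$ under shrinking the feasible set, taking $\epsilon$ to be the worst case over $\tilde{p}$ of the constant supplied by Theorem~\ref{thm:reward-robustness}. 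The remainder is bookkeeping: checking that the nested and joint minima genuinely coincide, and that both slack constants remain positive (and above $\gH_{\pi^*}[\ca \mid \cs]$, as Theorem~\ref{thm:dynamics-robustness} requires) so that the two robust sets are non-empty and the bound is non-vacuous.
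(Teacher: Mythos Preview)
Your composition approach is sound and is the most natural reading of the corollary given its phrasing (``under the same assumptions as Theorem~\ref{thm:reward-robustness} and~\ref{thm:dynamics-robustness}''). The paper itself does not spell out a proof; the closest it comes is Appendix~\ref{appendix:robust-to-both}, which takes a genuinely different route: it augments the state by appending the reward to the observation, so that perturbations to the reward become a special case of perturbations to the dynamics, and then invokes Theorem~\ref{thm:dynamics-robustness} alone on the augmented MDP. That trick is slick and avoids any chaining, but it does not literally recover the pair of robust sets $\tilde{\gR}(\pi)$, $\tilde{\gP}(\pi)$ as they appear in the corollary---the robust set one obtains is the dynamics set of Theorem~\ref{thm:dynamics-robustness} on the augmented MDP, which is a different object. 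In that sense your direct composition hews more closely to what the corollary actually asserts.

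The subtlety you flag---that $\tilde{\gR}(\pi)$ in Theorem~\ref{thm:reward-robustness} is defined via an expectation under the nominal dynamics $p$, whereas the inner application in your chain naturally sits under $\tilde{p}$---is real, and neither of the paper's treatments addresses it. Both of your proposed fixes (indexing $\tilde{\gR}$ by $\tilde{p}$, or intersecting over $\tilde{p}\in\tilde{\gP}(\pi)$ and using monotonicity of the inner minimum) are legitimate ways to make the statement precise; the intersection version yields a possibly smaller reward robust set but keeps it independent of the adversary's choice of dynamics, which matches the spirit of the corollary. Your handling here is if anything more careful than the original.
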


Our next result analyzes the size of the robust set by providing a lower bound on $\epsilon$. Doing so proves that Theorem~\ref{thm:dynamics-robustness} is non-vacuous and will also tell us how to increase the size of the robust set.
\begin{lemma} \label{lemma-epsilon}
Assume that the action space is discrete. Let a reward function $r(\cs, \ca)$ and dynamics $p(\ns \mid \cs, \ca)$ be given. Let $\pi(\ca \mid \cs)$ be the corresponding policy learned by MaxEnt RL. Then the size of $\epsilon$ (in Eq.~\ref{eq:dynamics-robust-set}) satisfies the following:
\begin{align*}
    \epsilon &= T \cdot \E\hspace{-0.5em}_{\substack{\ca \sim \pi(\ca \mid \cs),\\ \ns \sim p(\ns \mid \cs, \ca)}} \hspace{-0.5em} \left[ \gH_{\tilde{p}}[\ns \mid \cs, \ca] + \gH_\pi[\ca \mid \cs] \right] \ge T \cdot \E\hspace{-0.5em}_{\substack{\ca \sim \pi(\ca \mid \cs),\\ \ns \sim p(\ns \mid \cs, \ca)}}\hspace{-0.5em}\left[ \gH_\pi[\ca \mid \cs] \right].
\end{align*}
\end{lemma}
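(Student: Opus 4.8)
The plan is to extract the value of $\epsilon$ directly from the construction in the proof of Theorem~\ref{thm:dynamics-robustness} and then evaluate it in closed form. Recall that that proof reduces dynamics robustness to reward robustness: an adversarial transition kernel $\tilde{p}$ is re-expressed as an adversarial perturbation of an effective per-step reward, after which Theorem~\ref{thm:reward-robustness} is applied. Under that reduction, the admissibility constraint $\E_{\pi,p}[d(p,\tilde{p};\tau)] \le \epsilon$ of Eq.~\ref{eq:dynamics-robust-set} is exactly the image of the reward-robust constraint in Eq.~\ref{eq:robust-set-rewards}; in particular the double integral $\iint_{\gA\times\gS}\tfrac{p(\ns'\mid\cs,\ca')}{\tilde{p}(\ns'\mid\cs,\ca')}\,d\ca'\,d\ns'$ appearing in $d$ (Eq.~\ref{eq:d}) carries an $\gS$-integral inherited from re-weighting the next-state distribution (which is what the bonus $\gH[\ns\mid\cs,\ca]$ in $\bar{r}$, Eq.~\ref{eq:bar-r}, buys us) and an $\gA$-integral inherited from the reward-perturbation budget (which the standard MaxEnt bonus $\gH_\pi[\ca\mid\cs]$ buys us). Consequently $\epsilon$ is not a free parameter: the proof pins it down, and it equals $\E_{\pi,p}[d(p,\tilde{p}^\star;\tau)]$, where $\tilde{p}^\star$ is the transition kernel at which the chain of inequalities in Theorem~\ref{thm:dynamics-robustness} is tight (equivalently, the worst-case adversary, at which the budget constraint binds; this is the $\tilde{p}$ in the statement of the lemma). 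So the lemma amounts to computing this expected divergence.

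First I would identify $\tilde{p}^\star$. By the Gibbs/Donsker--Varadhan variational principle -- used once ``in the action direction'' (this is precisely Theorem~\ref{thm:reward-robustness}) and once ``in the next-state direction'' (the analogous statement for the transition kernel, which is available because $\bar{r}$ contains the bonus $\gH[\ns\mid\cs,\ca]$) -- equality is attained when the effective perturbed rewards are the soft-optimal ones, and unwinding this gives $\tilde{p}^\star(\ns\mid\cs,\ca)$ explicitly as a Gibbs re-weighting of $p(\ns\mid\cs,\ca)$. I would then substitute $\tilde{p}^\star$ into $d(p,\tilde{p}^\star;\tau)$ and, using that the action space is discrete so that $\int_\gA f(\ca')\,d\ca' = \sum_{\ca'\in\gA} f(\ca')$, evaluate the inner double integral. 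The key identification is that, after taking the expectation over $\ca\sim\pi(\ca\mid\cs)$ and $\ns\sim p(\ns\mid\cs,\ca)$, the per-state term $\log\iint_{\gA\times\gS}\tfrac{p(\ns'\mid\cs,\ca')}{\tilde{p}^\star(\ns'\mid\cs,\ca')}\,d\ca'\,d\ns'$ equals $\gH_{\tilde{p}^\star}[\ns\mid\cs,\ca] + \gH_\pi[\ca\mid\cs]$; summing the $T$ identical per-step contributions along the trajectory then yields $\epsilon = T\cdot\E_{\pi,p}[\gH_{\tilde{p}^\star}[\ns\mid\cs,\ca] + \gH_\pi[\ca\mid\cs]]$, the claimed equality.

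The inequality is then immediate: the next-state entropy term is non-negative in the discrete setting of this lemma (and finite by the standing assumption), so dropping it gives $\epsilon \ge T\cdot\E_{\pi,p}[\gH_\pi[\ca\mid\cs]]$. Two consequences I would record: since the action space is discrete, $\gH_\pi[\ca\mid\cs]\ge 0$, so $\epsilon \ge 0$ and (together with Theorem~\ref{thm:dynamics-robustness}) the robust set is non-vacuous; and $\epsilon$ is monotone in the entropy of the learned policy, which one raises by increasing the entropy coefficient $\alpha$ -- matching the paper's claim that large entropy coefficients enlarge the robust set.

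The step I expect to be the main obstacle is the key identification in the second paragraph: writing $\tilde{p}^\star$ out explicitly and checking that the log of the double integral collapses, in expectation, to exactly $\gH_{\tilde{p}^\star}[\ns\mid\cs,\ca] + \gH_\pi[\ca\mid\cs]$ with no stray constants. This requires careful bookkeeping of which measure each entropy is taken under ($\tilde{p}^\star$, $p$, and $\pi$ all appear), of the Gibbs normalizers introduced by the variational argument, and of exactly where discreteness of $\gA$ enters; a sign or normalization slip here would propagate directly into the final constant.
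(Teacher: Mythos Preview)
Your proposal is correct and follows essentially the same route as the paper. The paper's proof likewise (i) argues the budget constraint binds at the optimal perturbation (by linearity of the relaxed objective in $\Delta r$, so the optimum lies on the boundary), (ii) invokes the Fenchel/log-sum-exp duality (its Lemma~\ref{lemma:fenchel-2}, your Donsker--Varadhan) to obtain $\log\iint e^{\Delta r^\star} = -\log\tilde{p}(\ns\mid\cs,\ca) - \log\pi(\ca\mid\cs)$ pointwise, and (iii) takes the expectation under $(\pi,p)$ and drops the non-negative next-state entropy term; the only cosmetic difference is that the paper works directly with the optimal $\Delta r$ via the duality identity rather than first naming $\tilde{p}^\star$ and substituting.
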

This result provides an exact expression for $\epsilon$. Perhaps more interesting is the inequality, which says that the size of the robust set is at least as large as the policy's entropy. For example, if MaxEnt RL learns a policy with an entropy of 10 bits, then $\epsilon \ge 10$. This result immediately tells us how to increase the size of the robust set: run MaxEnt RL with a larger entropy coefficient $\alpha$. Many popular implementations of MaxEnt RL automatically tune the entropy coefficient $\alpha$ so that the policy satisfies an entropy constraint~\citep{haarnoja2018applications}. Our derivation here suggests that the entropy constraint on the policy corresponds to a lower bound on the size of the robust set.

The constraint in the definition of the robust set holds in expectation. Thus, the adversary can make large perturbations to some transitions and smaller perturbations to other states, as long as the average perturbation is not too large. Intuitively, the constraint value $\epsilon$ is the adversary's ``budget,'' which it can use to make large changes to the dynamics in just a few states, or to make smaller changes to the dynamics in many states.

\subsection{Worked Examples}
\label{sec:worked-examples}
This section provides worked examples of our robustness results. The aim is to build intuition for what our results show and to show that, in simple problems, the robust set has an intuitive interpretation.

\paragraph{Reward robustness.}

We present two worked examples of reward robustness. To simplify analysis, we consider the following, smaller robust set, which effectively corresponds to a weaker adversary:
\begin{equation}
    \left \{ r(\cs, \ca) \; \bigg \vert \; \log \int_{\gA} \exp(r(\cs, \ca) - \tilde{r}(\cs, \ca)) d\ca' \le \frac{\epsilon}{T} \right \} \subseteq \tilde{\gR}. \label{eq:reward-robust-subset}
\end{equation}

\begin{figure}[t]
    \centering
    \vspace{-2em}
    \includegraphics[width=\linewidth]{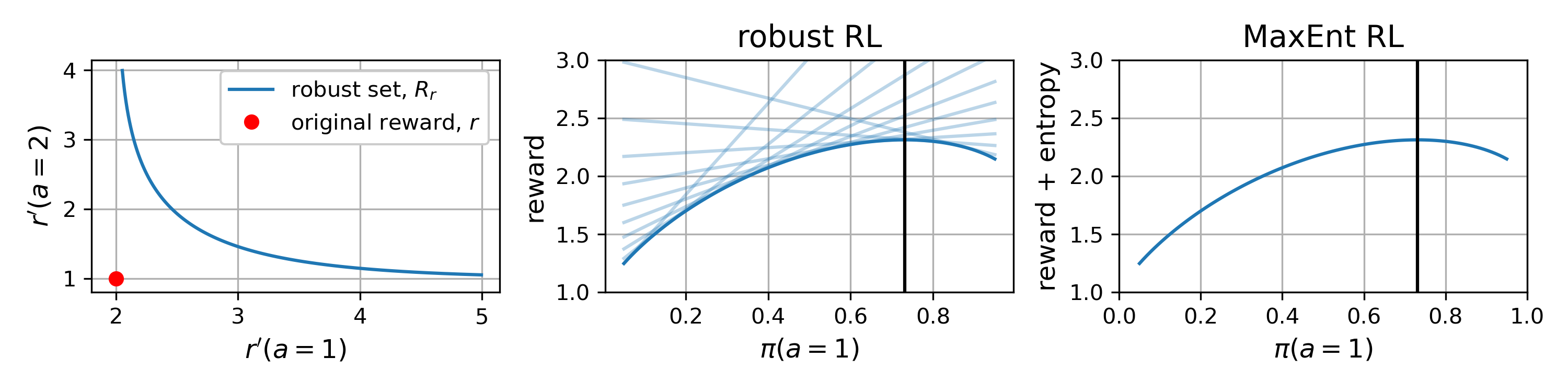}
    \vspace{-2em}
    \caption{{\footnotesize \textbf{MaxEnt RL and Robustness to Adversarial Reward Functions}:
    \figleft \, Applying MaxEnt RL to one reward function (red dot) yields a policy that is guaranteed to get high reward on many other reward functions (blue curve).  \figcenter \, For each reward function $(r(a=1), r(a=2))$ on that blue curve, we evaluate the expected return of a stochastic policy. The robust RL problem (for rewards) is to choose the policy whose worst-case reward (dark blue line) is largest. \figright \, Plotting the MaxEnt RL objective (Eq.~\ref{eq:maxent-obj}) for those same policies, we observe that the MaxEnt RL objective is identical to the robust RL objective .}}
    \label{fig:example}
    \vspace{-1.em}
\end{figure}

For the \textbf{first example}, define a 2-armed bandit with the following reward function and corresponding robust set:
\begin{equation*}
    r(\mathbf{a}) = \begin{cases}
    2 & \mathbf{a} =1 \\
    1 & \mathbf{a} = 2 \end{cases}, \; \tilde{\gR} = \left\{\tilde{r} \;\Big\vert\; \log \int_{\gA} \exp(r(\mathbf{a}) - \tilde{r}(\mathbf{a})) d\mathbf{a} \le \epsilon \right\}.
\end{equation*}
Fig.~\ref{fig:example} (left) plots the original reward function, $r$ as a red dot, and the collection of reward functions, $\tilde{\gR}$, as a blue line. In the center plot we plot the expected reward for each reward in $\tilde{\gR}$ as a function of the action $\mathbf{a}$. The robust RL problem in this setting is to choose the policy whose worst-case reward (dark blue line) is largest. The right plot shows the MaxEnt RL objective. We observe that the robust RL objective and the MaxEnt RL objectives are equivalent.

For the \textbf{second example}, we use a task with a 1D, bounded action space $\gA = [-10, 10]$ and a reward function composed of a task-specific reward $r_\text{task}$ and a penalty for deviating from some desired action $\mathbf{a}^*$: $r(\mathbf{s}, \mathbf{a}) \triangleq r_{\text{task}}(\mathbf{s}, \mathbf{a}) - (\mathbf{a} - \mathbf{a}^*)^2$.
The adversary will perturb this desired action by an amount $\Delta a$ and decrease the weight on the control penalty by 50\%, resulting in the following reward function:
$\tilde{r}(\mathbf{s}, \mathbf{a}) \triangleq r_{\text{task}}(\mathbf{s}, \mathbf{a}) - \frac{1}{2}(\mathbf{a} - (\mathbf{a}^* + \Delta \mathbf{a}))^2$.
In this example, the subset of the robust set in Eq.~\ref{eq:reward-robust-subset} corresponds to perturbations $\Delta \mathbf{a}$ that satisfy
\begin{equation*}
\Delta \mathbf{a}^2 + \frac{1}{2} \log (2 \pi) + \log (20) \le \frac{\epsilon}{T}.
\end{equation*}
Thus, MaxEnt RL with reward function $r$ yields a policy that is robust against adversaries that perturb $\mathbf{a}^*$ by at most $\Delta \mathbf{a} = \gO(\sqrt{\epsilon})$.
See Appendix~\ref{appendix:reward-robustness-example} for the full derivation.

\paragraph{Dynamics robustness.}

The set of dynamics we are robust against, $\tilde{\gP}$, has an intuitive interpretation as those that are sufficiently close to the original dynamics $p(\ns \mid \cs, \ca)$. This section shows that, in the case of linear-Gaussian dynamics (described at the end of this subsection), this set corresponds to a bounded L2 perturbation of the next state.

Because the robust set in Theorem~\ref{thm:dynamics-robustness} is defined in terms of the policy, the adversary can intelligently choose where to perturb the dynamics based on the policy's behavior. Robustness against this adversary also guarantees robustness against an adversary with a smaller robust set, that does not depend on the policy:
\begin{equation*}
    \left\{ \tilde{p}(\ns \mid \cs, \ca) \; \bigg \vert \; \log \iint_{\gA\times\gS} e^{\log p(\ns \mid \cs, \ca) - \log \tilde{p}(\ns \mid \cs, \ca) } d \ca' d\ns' \le \frac{\epsilon}{T} \right\} \subseteq \tilde{\gP}.
\end{equation*}
We will use this subset of the robust set in the following worked example.

Consider an MDP with 1D, bounded, states and actions $\cs, \ca \in [-10, 10]$. The dynamics are $p(\ns \mid \cs, \ca) = \gN(\ns; \mu = A\cs + B\ca, \sigma = 1)$ the reward function is $r(\cs, \ca) = \|\cs\|_2^2$, and episodes have finite length $T$. Note that the dynamics entropy $\gH[\ns \mid \cs, \ca]$ is constant.
We assume the adversary modifies the dynamics by increasing the standard deviation to $\sqrt{2}$ and shifts the bias by an amount $\beta$, resulting in the dynamics $\tilde{p}(\ns \mid \cs, \ca) = \gN(\ns; \mu = A\cs + B\ca + \beta, \sigma = \sqrt{2})$. The robust set defined in Theorem~\ref{thm:dynamics-robustness} specifies that the adversary can choose any value of $\beta$ that satisfies $\frac{1}{2} \beta^2 + \log(8 \sqrt{\pi}) + \log(20) \le \epsilon$. To apply MaxEnt RL to learn a policy that is robust to any of these adversarial dynamics, we would use the pessimistic reward function specified by Theorem~\ref{thm:dynamics-robustness}: $\bar{r}(\cs, \ca, \ns) = \frac{2}{T} \log \|\cs\|_2$. See Appendix~\ref{appendix:dynamics-robustness-example} for the full derivation.

\subsection{Limitations of Analysis}
We identify a few limitations of our analysis that may provide directions for future work.
First, our definition of the robust set is different from the more standard $H_\infty$ and KL divergence constraint sets used in prior work. Determining a relationship between these two different sets would allow future work to claim that MaxEnt RL is also robust to these more standard constraint sets (see~\citet{lecarpentier2020lipschitz}). On the other hand, MaxEnt RL may not be robust to these more conventional constraint sets. Showing this may inform practitioners about what sorts of robustness they should \emph{not} except to reap from MaxEnt RL.
A second limitation is the construction of the augmented reward: to learn a policy that will maximize reward function $r$ under a range of possible dynamics functions, our theory says to apply MaxEnt RL to a different reward function, $\bar{r}$, which includes a term depending on the dynamics entropy. While this term can be ignored in special MDPs that have the same stochasticity at every state, in more general MDPs it will be challenging to estimate this augmented reward function. Determining more tractable ways to estimate this augmented reward function is an important direction for future work.

\vspace{-0.5em}
\section{Numerical Simulations}
\label{sec:experiments}
\vspace{-0.5em}

This section will present numerical simulations verifying our theoretical result that MaxEnt RL is robust to disturbances in the reward function and dynamics function. We describe our precise implementation of MaxEnt RL, standard RL, and all environments in Appendix~\ref{appendix:experiments}.

\begin{figure}[!t]
\centering
\vspace{-2em}
\includegraphics[width=\linewidth]{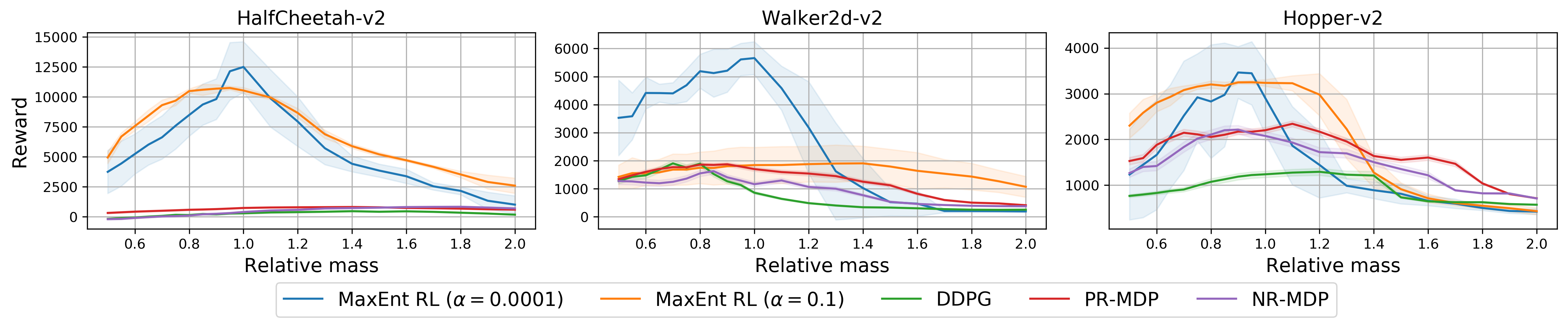}
\vspace{-1.5em}
\caption{{\footnotesize MaxEnt RL is competitive with prior robust RL methods.}} \label{fig:action-robust}
\vspace{-1.5em}
\end{figure}

\paragraph{Comparison with prior robust RL methods.}
We compare MaxEnt RL against two recent robust RL methods, PR-MDP and NR-MDP~\citep{tessler2019action}. These methods construct a two-player game between a player that chooses actions and a player that perturbs the dynamics or actions. This recipe for robust RL is common in prior work~\citep{pinto2017robust}, and we choose to compare to~\citet{tessler2019action} because it is a recent, high-performing instantiation of this recipe.
We also include the DDPG baseline from~\citet{tessler2019action}. We evaluate all methods on the benchmark tasks used in~\citet{tessler2019action}, which involve evaluating in an environment where the masses are different from the training environment. We ran MaxEnt RL with both small and large entropy coefficients, evaluating the final policy for 30 episodes and taking the average performance. We repeated this for 5 random seeds to obtain the standard deviation. The results shown in Fig.~\ref{fig:action-robust} suggest that MaxEnt RL \emph{with a large entropy coefficient $\alpha$} is at least competitive, if not better, than prior purpose-designed robust RL methods. Note that MaxEnt RL is substantially simpler than PR-MDP and NR-MDP.

\paragraph{Intuition for why MaxEnt RL is robust.}
To build intuition into why MaxEnt RL should yield robust policies, we compared MaxEnt RL (SAC~\citep{haarnoja2018soft}) with standard RL (TD3~\citep{fujimoto2018addressing}) on two tasks. On the pusher task, shown in Fig.~\ref{fig:teaser}, a new obstacle was added during evaluation. On the button task, shown in Fig.~\ref{fig:dynamics-button}, the box holding the button was moved closer or further away from the robot during evaluation.
\emph{Note that the robust RL objective corresponds to an adversary choosing the worst-case disturbances to these environments.}

Fig.~\ref{fig:teaser} \emph{(right)} and Fig.~\ref{fig:dynamics-button} \emph{(center)} show that MaxEnt RL has learned many ways of solving these tasks, using different routes to push the puck to the goal and using different poses to press the button. Thus, when we evaluate the MaxEnt RL policy on perturbed environments, it is not surprising that some of these strategies continue to solve the task.
In contrast, the policy learned by standard RL always uses the same strategy to solve these tasks, so the agent fails when a perturbation to the environment makes this strategy fail. Quantitatively, Fig.~\ref{fig:dynamics-obstacle} and~\ref{fig:dynamics-button} (right) show that the MaxEnt RL policy is more robust than a policy trained with standard RL.

\begin{figure*}[!t]
    \centering
    \vspace{-2em}
   \begin{subfigure}[b]{0.33\textwidth}
        \centering
        \includegraphics[width=\linewidth]{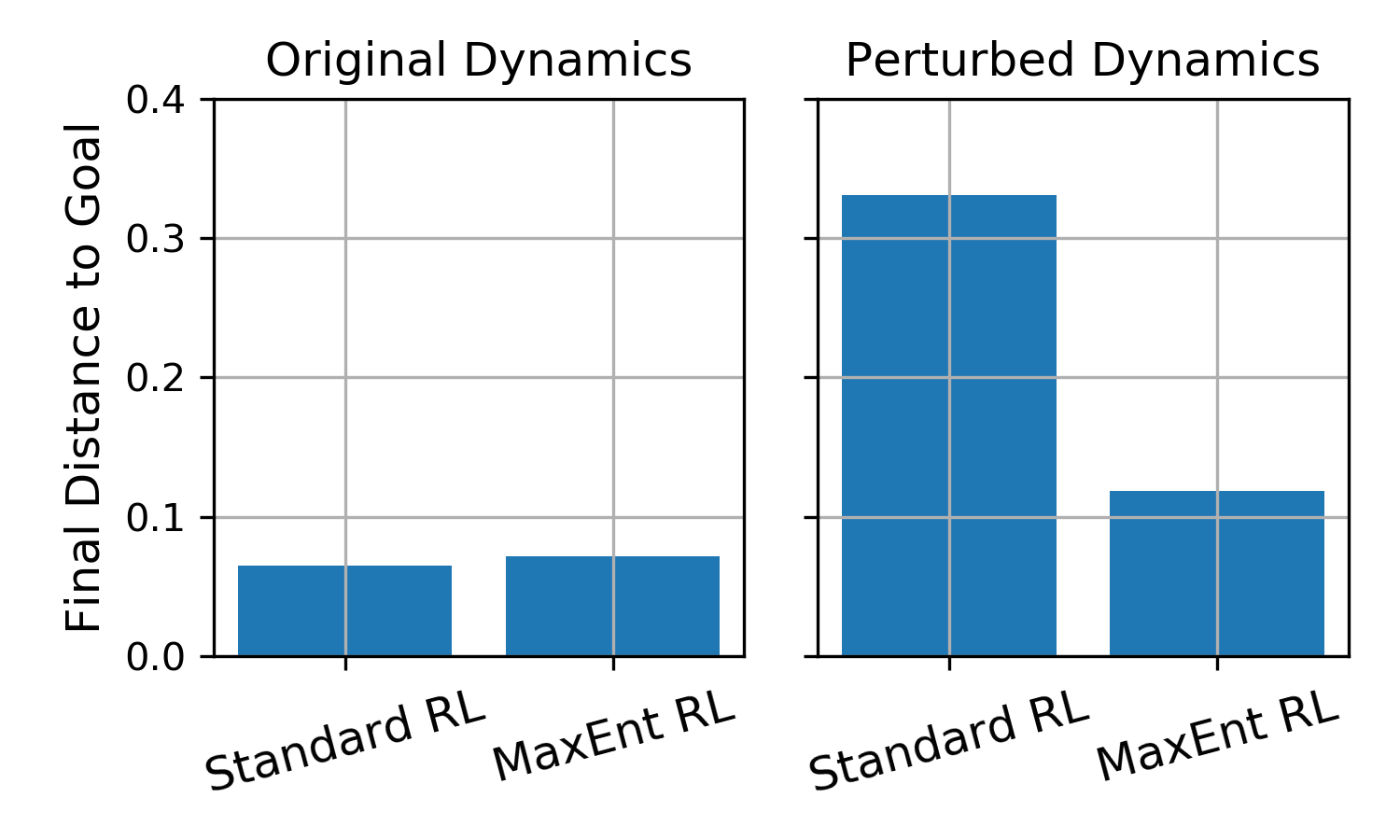}
        \caption{\vspace{-0.5em}Pusher: New Obstacle \label{fig:dynamics-obstacle}}
    \end{subfigure}
    \rule[0.5cm]{0.05em}{2.7cm}
    \begin{subfigure}[b]{0.63\textwidth}
        \centering
        \includegraphics[align=c, width=0.24\linewidth]{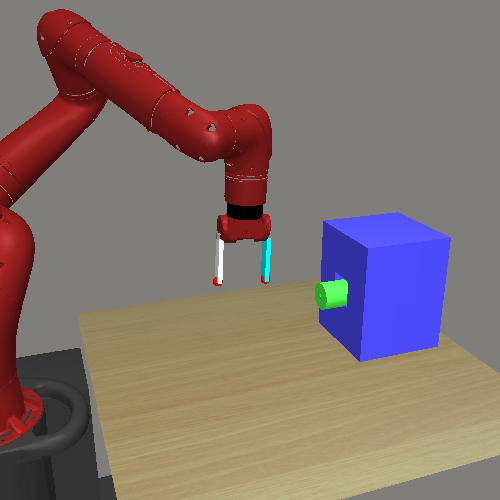}
        \includegraphics[align=c, width=0.24\linewidth]{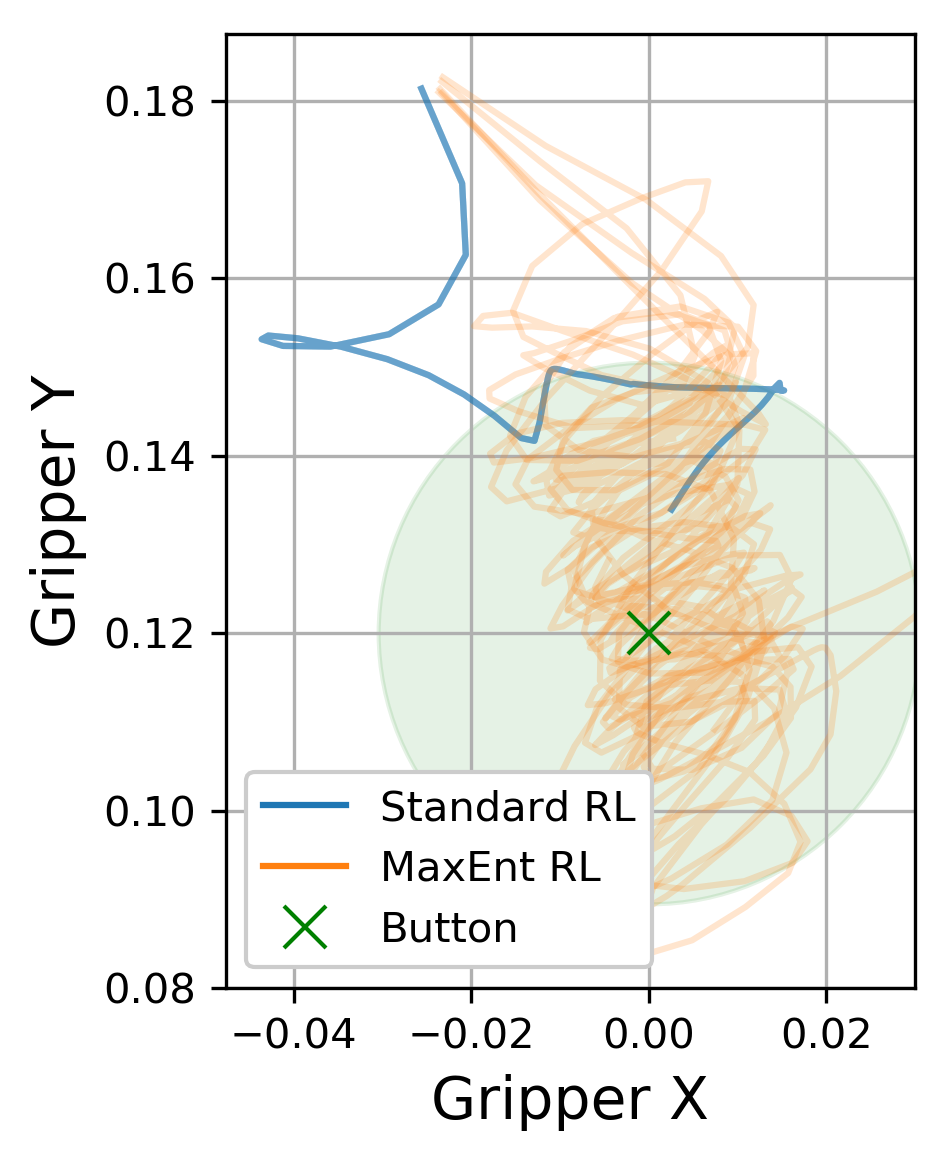}
        \includegraphics[align=c, width=0.49\linewidth]{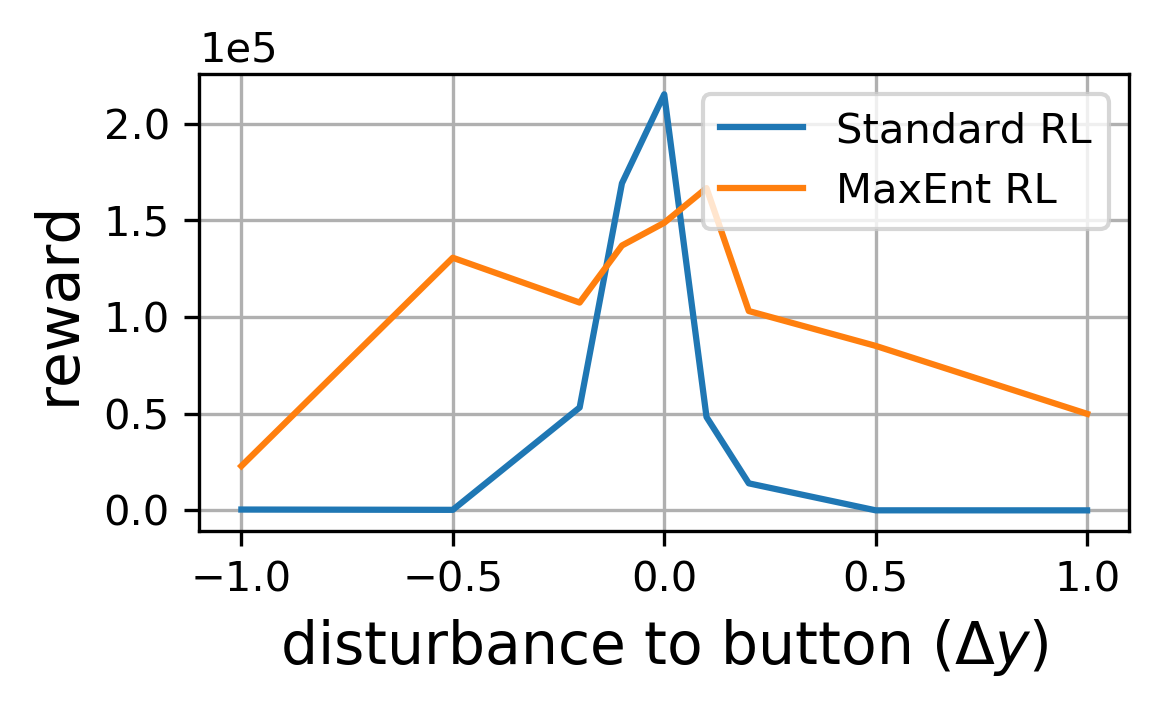}
        \caption{Button: Moved Goal \label{fig:dynamics-button}}
    \end{subfigure}
    \caption{{\footnotesize \textbf{Robustness to changes in the dynamics}: MaxEnt RL policies learn many ways of solving a task, making them robust to perturbations such as \figleft \, new obstacles and \figright \, changes in the goal location.
    \label{fig:dynamics}}}
    \vspace{-1.5em}
\end{figure*}

\begin{wrapfigure}[13]{R}{0.5\textwidth}
    \centering
    \vspace{-1em}
    \includegraphics[width=\linewidth]{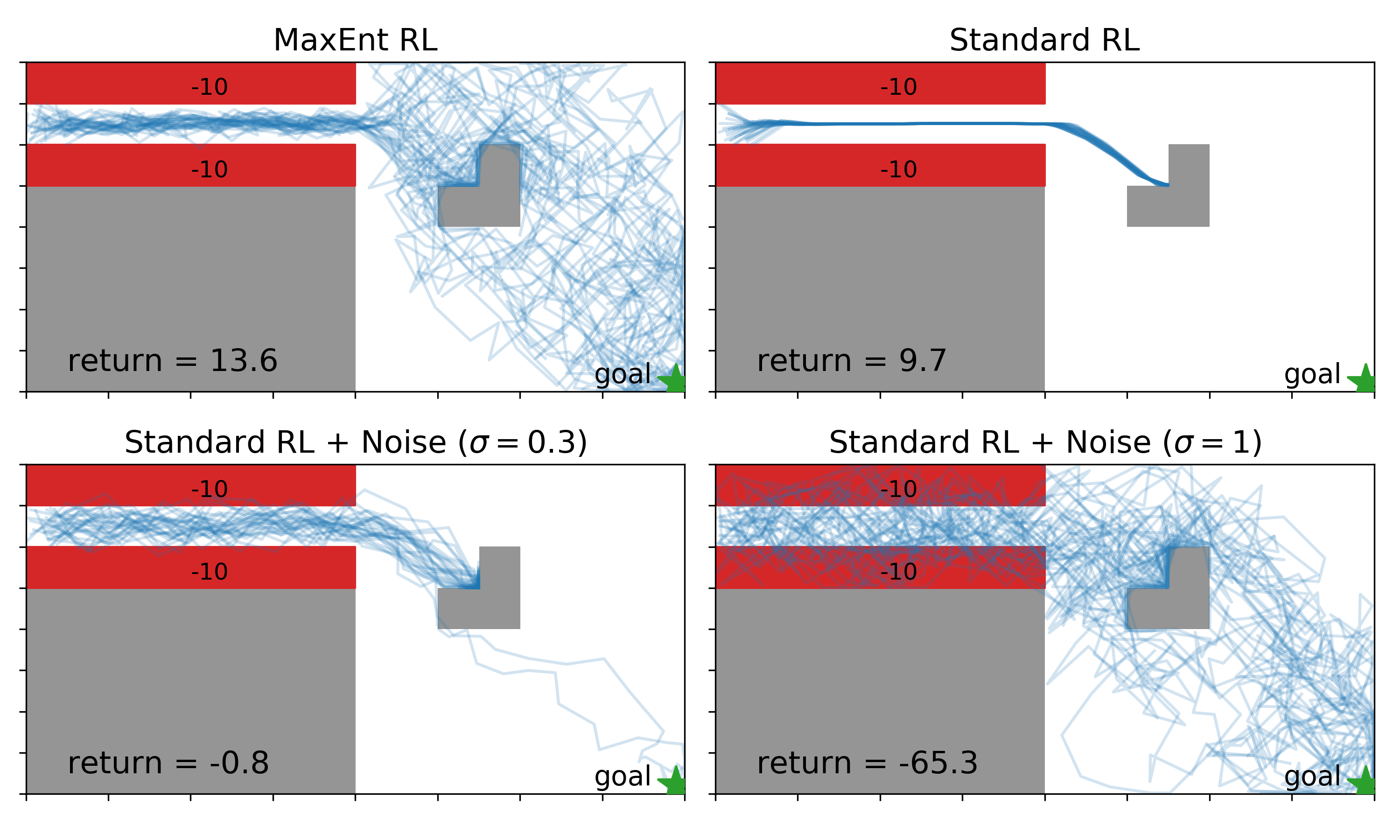}
    \vspace{-2em}
    \caption{{\footnotesize MaxEnt RL is not standard RL + noise.}}
    \label{fig:lava}
    \vspace{-1em}
\end{wrapfigure}
In many situations, simply adding noise to the deterministic policy found by standard RL can make that policy robust to some disturbances. MaxEnt RL does something more complex, dynamically adjusting the amount of noise depending on the current state. This capability allows MaxEnt RL policies to have lower entropy in some states as needed to ensure high reward. We study this capability in the 2D navigation task shown in Fig.~\ref{fig:lava}. The agent starts near the top left corner and gets reward for navigating to the bottom right hand corner, but incurs a large cost for entering the red regions. The policy learned by MaxEnt RL has low entropy initially to avoid colliding with the red obstacles, and then increases its entropy in the second half of the episode. To study robustness, we introduce a new ``L''-shaped obstacle for evaluation. The policy learned by MaxEnt RL often navigates around this obstacle, whereas the policy from standard RL always collides with this obstacle.
Adding independent Gaussian noise to the actions from the standard RL policy can enable that policy to navigate around the obstacle, but only at the cost of entering the costly red states.

\paragraph{Testing for different types of robustness.}
Most prior work on robust RL focuses on changing static attributes of the environment, such as the mass or position of objects~\citep{tessler2019action, kamalaruban2020robust}. However, our analysis suggests that MaxEnt RL is robust against a wider range of perturbations, which we probe in our next set of experiments.

\begin{wrapfigure}{r}{0.5\textwidth}
    \centering
    \vspace{-0.5em}
    \includegraphics[width=\linewidth]{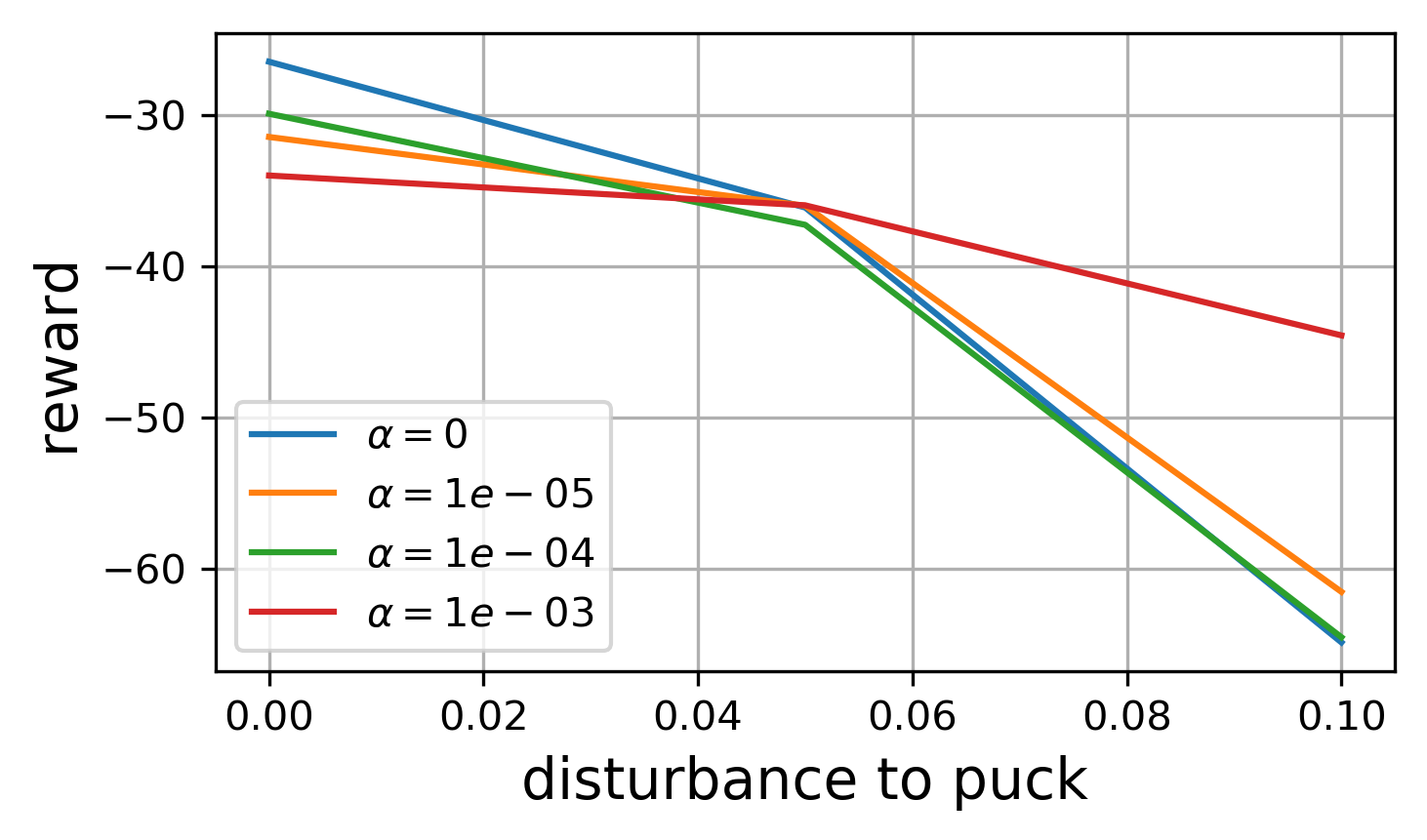}
    \vspace{-2em}
    \caption{{\footnotesize \textbf{Robustness to dynamic perturbations}: MaxEnt RL is robust to random external forces applied to the environment dynamics. \label{fig:dynamics-force}}}
\end{wrapfigure}
First, we introduced perturbations in the middle of an episode. We took the pushing task shown in Fig.~\ref{fig:teaser} and, instead of adding an obstacle, perturbed the XY position of the puck after 20 time steps. By evaluating the reward of a policy while varying the size of this perturbation, we can study the range of disturbances to which MaxEnt RL is robust. We measured the performance of MaxEnt RL policies trained with different entropy coefficients $\alpha$. %
The results shown in Fig.~\ref{fig:dynamics-force} indicate all methods perform well on the environment without any disturbances, but only the MaxEnt RL trained with the largest entropy coefficient is robust to larger disturbances.
This experiment supports our theory that the entropy coefficient determines the size of the robust set.

Our analysis suggests that MaxEnt RL is not only robust to random perturbations, but is actually robust against \emph{adversarial} perturbations.  We next compare MaxEnt RL and standard RL in this setting. We trained both algorithms on a peg insertion task shown in Fig.~\ref{fig:dynamic-perturbations} \emph{(left)}. Visualizing the learned policy, we observe that the standard RL policy always takes the same route to the goal, whereas the MaxEnt RL policy uses many different routes to get to the goal. For each policy we found the worst-case perturbation to the hole location using CMA-ES~\citep{hansen2016cma}. For small perturbations, both standard RL and MaxEnt RL achieve a success rate near 100\%. However, for perturbations that are 1.5cm or 2cm in size, only MaxEnt RL continues to solve the task. For larger perturbations neither method can solve the task. In summary, this experiment highlights that MaxEnt RL is robust to \emph{adversarial} disturbances, as predicted by the theory.

\begin{figure}[!t]
    \centering
    \vspace{-2em}
    \begin{subfigure}[b]{\linewidth}
        \centering
        \includegraphics[align=c, width=.2\linewidth]{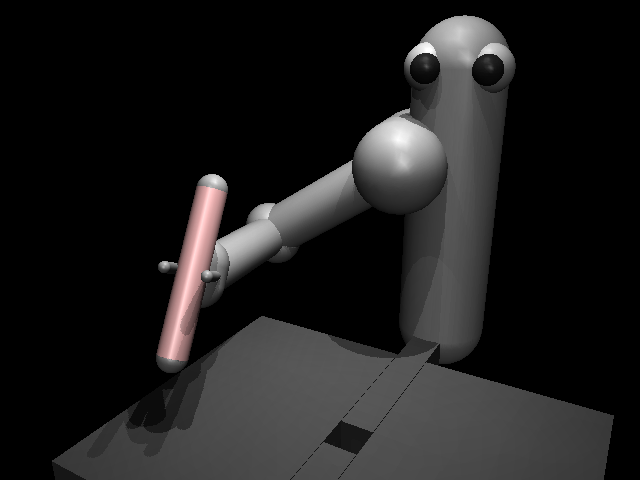}
        \includegraphics[align=c, width=.3\linewidth]{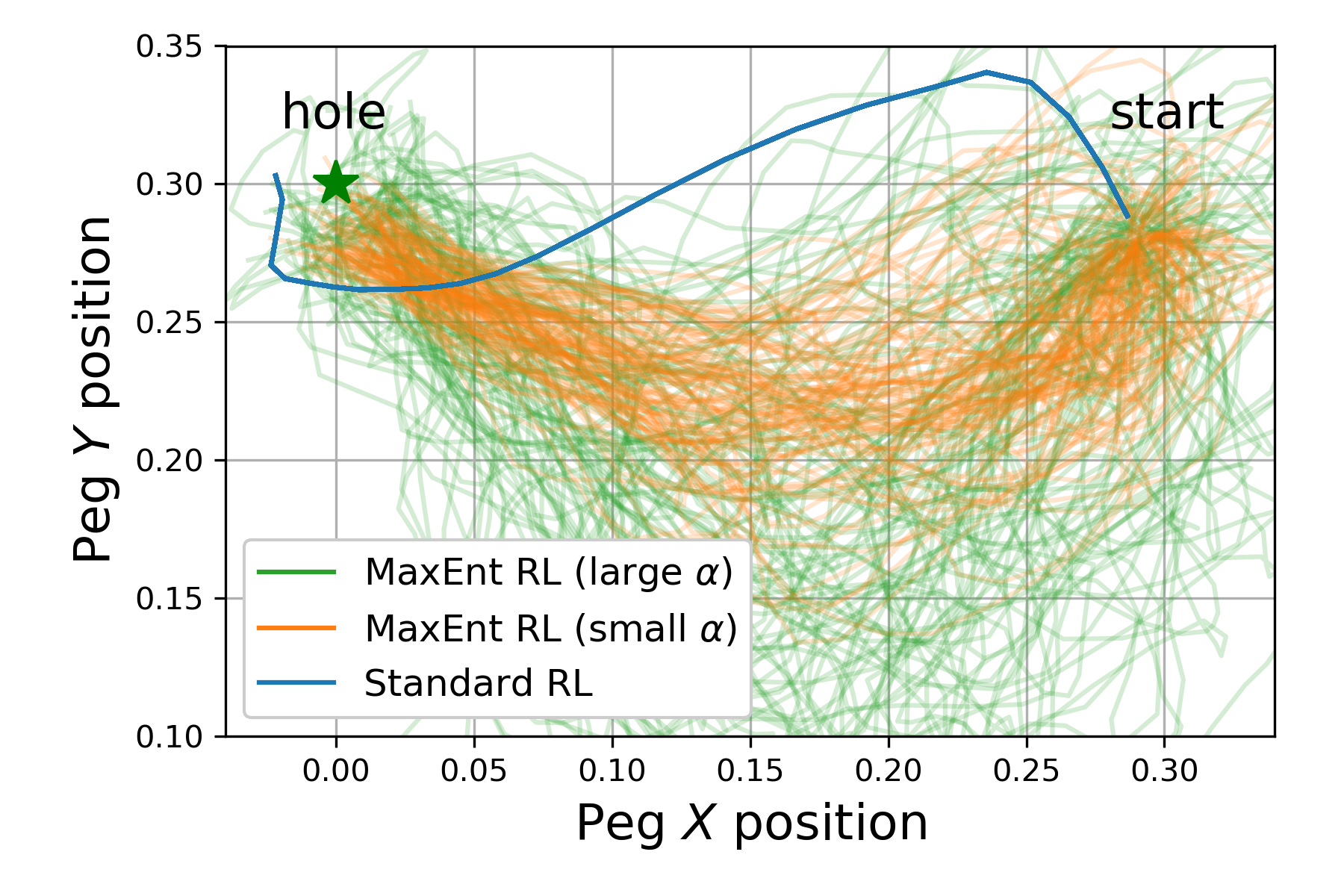}
        \includegraphics[align=c, width=.4\linewidth]{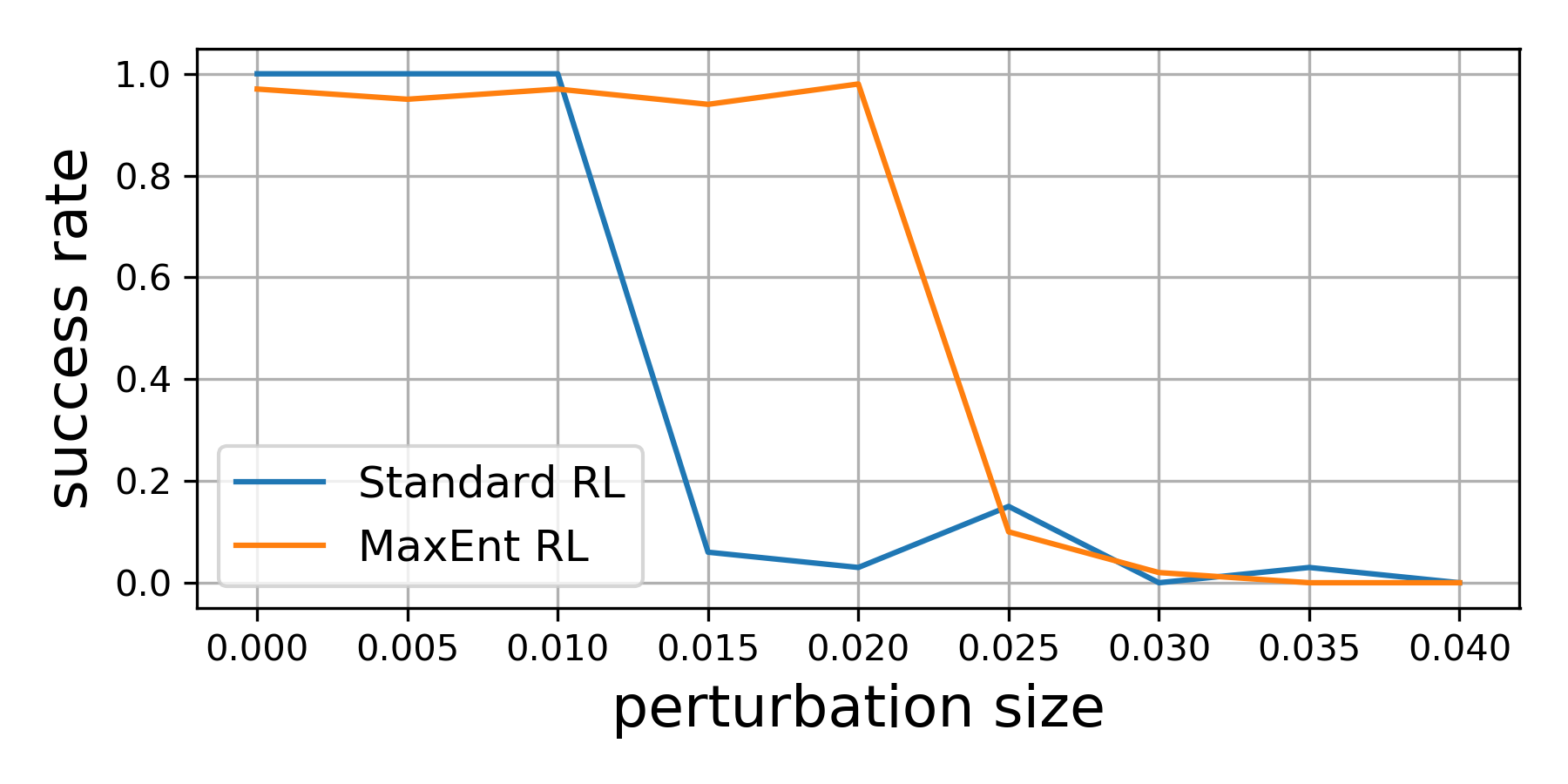}
    \end{subfigure}
    \vspace{-1em}
    \caption{{\footnotesize Robustness to adversarial perturbations of the environment dynamics. \label{fig:dynamic-perturbations}}}
    \vspace{-1em}
\end{figure}

\begin{figure}[t]
    \centering
    \includegraphics[width=\linewidth]{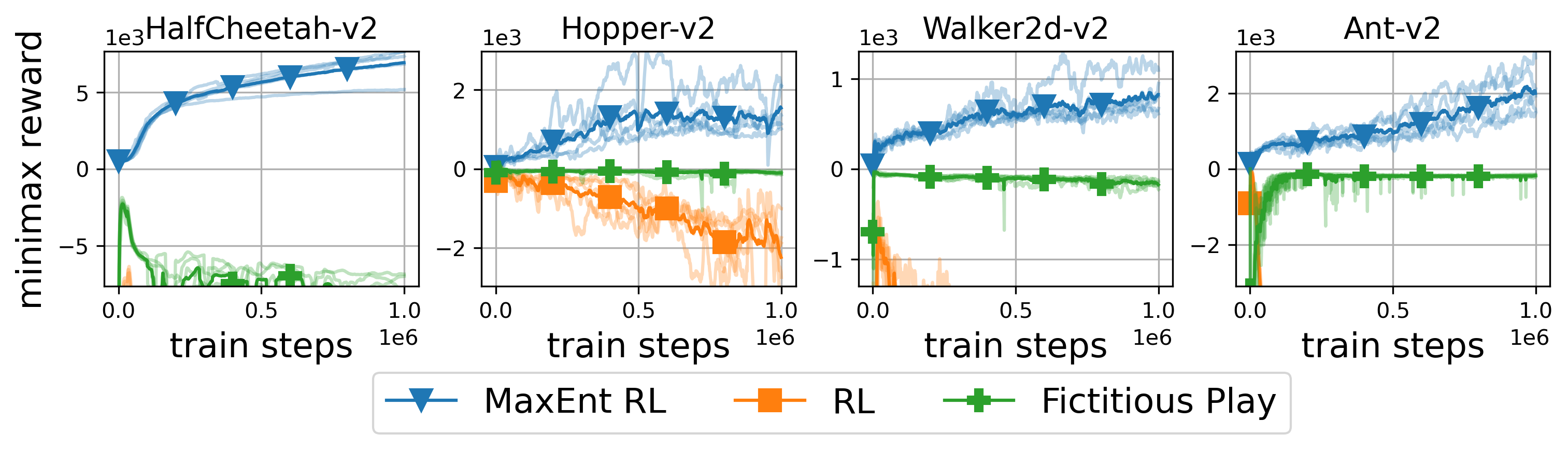}
    \vspace{-1.7em}
    \caption{{\footnotesize MaxEnt RL policies are robust to disturbances in the reward function.
    \label{fig:mujoco}}}
    \vspace{-1.5em}
\end{figure}
Finally, our analysis suggests that MaxEnt RL is also robust to perturbations to the reward function. To test this theoretical result, we apply MaxEnt RL on four continuous control tasks from the standard OpenAI Gym~\citep{brockman2016openai} benchmark. We compare to SVG-0~\citep{heess2015learning} (which uses stochastic policy gradients) and to fictitious play~\citep{brown1951iterative}. In the RL setting, fictitious play corresponds to modifying standard RL to use an adversarially-chosen reward function for each Bellman update.
We evaluate the policy on an adversarially chosen reward function, chosen from the set defined in Equation~\ref{eq:robust-set-rewards}. The analytic solution for this worst-case reward function is \mbox{$\tilde{r}(\cs, \ca) = r(\cs, \ca) - \log \pi(\ca \mid \cs)$}.
Both MaxEnt RL and standard RL can maximize the cumulative reward (see Fig.~\ref{fig:mujoco-full} in Appendix~\ref{appendix:experiments}), but only MaxEnt RL succeeds as maximizing the worst-case reward, as shown in Fig.~\ref{fig:mujoco}. In summary, this experiment supports our proof that MaxEnt RL solves a robust RL problem for the set of rewards specified in Theorem~\ref{thm:reward-robustness}.

\section{Discussion}
\label{sec:discussion}

In this paper, we formally showed that MaxEnt RL algorithms optimize a bound on a robust RL objective. This robust RL objective uses a different reward function than the MaxEnt RL objective.
Our analysis characterizes the robust sets for both reward and dynamics perturbations, and provides intuition for how such algorithms should be used for robust RL problems. To our knowledge, our work is the first to formally characterize the robustness of MaxEnt RL algorithms, despite the fact that numerous papers have conjectured that such robustness results may be possible. Our experimental evaluation shows that, in line with our theoretical findings, simple MaxEnt RL algorithms perform competitively with (and sometimes better than) recently proposed adversarial robust RL methods on benchmarks proposed by those works.

Of course, MaxEnt RL methods are not necessarily the \emph{ideal} approach to robustness: applying such methods still requires choosing a hyperparameter (the entropy coefficient), and the robust set for MaxEnt RL is not always simple. Nonetheless, we believe that the analysis in this paper,
represents an important step towards a deeper theoretical understanding of the connections between robustness and entropy regularization in RL. We hope that this analysis will open the door for the development of new, simple algorithms for robust RL. %

{\footnotesize
\paragraph*{Acknowledgments}
We thank Ofir Nachum, Brendan O'Donoghue, Brian Ziebart, and anonymous reviewers for their feedback on an early drafts. BE is supported by the Fannie and John Hertz Foundation and the National Science Foundation GFRP (DGE 1745016).
}

{\footnotesize

}

\clearpage
\appendix

\section{Proofs}
\label{appendix:robust-control}

\subsection{Useful Lemmas}
Before stating the proofs, we recall and prove two (known) identities.
\begin{lemma} \label{lemma:fenchel}
\begin{equation}
E_{p(x, y)}[-\log p(x \mid y)] = \min_{f(x, y)} E_{p(x, y)} \left[-f(x, y) + \log \sum_{x'} e^{f(x', y)} \right]. \label{eq:identity}
\end{equation}
\end{lemma}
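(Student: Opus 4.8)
The plan is to reduce the minimization over the bivariate function $f(x,y)$ to, for each fixed $y$, a finite-dimensional convex problem over the slice $f(\cdot,y)$, and then solve that inner problem in closed form via the Gibbs variational principle. First I would factor $p(x,y) = p(y)\,p(x\mid y)$ and rewrite the right-hand side of Eq.~\ref{eq:identity} as
\[
\min_{f} \; E_{p(y)}\Big[ \textstyle\sum_x p(x\mid y)\big(-f(x,y)\big) + \log\textstyle\sum_{x'} e^{f(x',y)} \Big].
\]
Since the values $\{f(x,y)\}_x$ for one $y$ may be chosen independently of the values for other $y$'s, and the objective is an average over $y$ of a quantity depending only on $f(\cdot,y)$, the minimization commutes with the outer expectation. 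So it suffices to solve, for each fixed $y$, the inner problem $\min_{g}\big[-\sum_x q(x)\, g(x) + \log\sum_{x'} e^{g(x')}\big]$, where I write $q(\cdot) \triangleq p(\cdot\mid y)$ and $g(\cdot) \triangleq f(\cdot,y)$.

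Next I would solve this inner problem by the standard softmax substitution. Given any $g$, set $Z \triangleq \sum_{x'} e^{g(x')}$ and $\sigma(x) \triangleq e^{g(x)}/Z$, which is a probability distribution (and conversely every strictly positive $\sigma$ arises this way, with $g = \log\sigma$). Using $\sum_x q(x) = 1$,
\[
-\sum_x q(x)\, g(x) + \log Z \;=\; -\sum_x q(x)\log\sigma(x) \;=\; \gH[q] + \KL(q\,\|\,\sigma) \;\ge\; \gH[q],
\]
where $\gH[q] \triangleq -\sum_x q(x)\log q(x)$, with equality iff $\sigma = q$, i.e. $g = \log q$ up to an additive constant. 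Hence the inner minimum equals $\gH[q] = -\sum_x p(x\mid y)\log p(x\mid y)$. Taking the expectation over $y$ gives $E_{p(y)}\big[\gH[p(\cdot\mid y)]\big] = E_{p(x,y)}[-\log p(x\mid y)]$, which is the left-hand side of Eq.~\ref{eq:identity}. (Equivalently, one could obtain the same inner minimum by noting convexity of the objective in $g$ and setting the gradient $-q(x) + e^{g(x)}/Z$ to zero; the KL form above is cleaner and makes the optimizer explicit.)

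There is no deep obstacle here — this is a known identity — so the only thing to handle carefully is the degenerate case where $p(x\mid y)=0$ for some $x$: then the optimizing $g(x) = \log p(x\mid y) = -\infty$ is not finite, so strictly speaking the stated $\min$ should be read as an infimum unless $p(\cdot\mid y)$ has full support, the value still being $\gH[q]$ in the limit $g(x)\to-\infty$ on the null set (with the convention $0\log 0 = 0$). I would therefore either state the lemma under the full-support convention already in force elsewhere in the paper, or simply restrict the pointwise minimization to the support of $p(\cdot\mid y)$, where the argument above applies verbatim.
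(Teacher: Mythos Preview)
Your argument is correct. Your route differs from the paper's: the paper proves the lemma by calculus of variations on the full bivariate objective---it differentiates $E_{p(x,y)}[-f(x,y)] + \log\sum_{x'} e^{f(x',y)}$ with respect to $f(x,y)$, sets the gradient to zero to obtain $e^{f^*(x,y)}/\sum_{x'} e^{f^*(x',y)} = p(x,y)$, then argues via a short (and somewhat awkward) computation that this forces $p(y)=1$ and hence the softmax equals $p(x\mid y)$, and finally substitutes back. You instead first condition on $y$ to reduce to a one-variable problem, and then resolve the inner minimization by the Gibbs/Donsker--Varadhan identity $-\sum_x q(x)g(x) + \log Z = \gH[q] + \KL(q\|\sigma)$, reading off the minimum as $\gH[q]$ directly. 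Your approach is cleaner: the KL decomposition makes the lower bound and the attaining $g$ explicit without any stationarity computation, and your explicit factorization by $y$ avoids the paper's detour through ``$p(y)=1$.'' You also flag the support issue (infimum versus minimum when $p(x\mid y)$ has zeros), which the paper does not address; that caveat is apt and matches the full-support assumption used elsewhere in the paper.
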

This identity says that the negative entropy function is the Fenchel dual of the log-sum-exp function. This identity can be proven using calculus of variations:
\begin{proof}
We start by finding the function $f(x, y)$ that optimizes the RHS of Eq.~\ref{eq:identity}. Noting that the RHS is a convex function of $f(x, y)$, we can find the optimal $f(x, y)$ by taking the derivative and setting it equal to zero:
\begin{align*}
    \frac{d}{d f(x, y)} \left(E_{p(x, y)}[-f(x, y)] + \log \sum_{x'} e^{f(x', y)} \right) = -p(x, y) + \frac{e^{f(x, y)}}{\sum_{x'} e^{f(x', y)}}.
\end{align*}
Setting this derivative equal to zero, we see that solution $f^*(x, y)$ satisfies
\begin{equation*}
    \frac{e^{f^*(x, y)}}{\sum_{x'} e^{f^*(x', y)}} = p(x, y).
\end{equation*}
Now, we observe that $p(y) = 1$:
\begin{align*}
    p(y) &= \sum_{x} p(x, y) \\
    &= \sum_x \frac{e^{f^*(x, y)}}{\sum_{x'} e^{f^*(x', y)}} \\
    &= \frac{\sum_{x} e^{f^*(x, y)}}{\sum_{x'} e^{f^*(x', y)}} = 1.
\end{align*}
We then have
\begin{equation*}
    p(x \mid y) = \frac{p(x, y)}{\cancelto{1}{p(y)}} = \frac{e^{f^*(x, y)}}{\sum_{x'} e^{f^*(x', y)}}.
\end{equation*}
Taking the log of both sides, we have
\begin{equation*}
    \log p(x \mid y) = f^*(x, y) - \log \sum_{x'} e^{f^*(x', y)}.
\end{equation*}
Multiplying both sides by -1 and taking the expectation w.r.t. $p(x, y)$, we obtain the desired result:
\begin{align*}
    E_{p(x, y)}[-\log p(x \mid y)] &= E_{p(x, y)}[-f^*(x, y)] + \log \sum_{x'} e^{f^*(x', y)} \\
    &= \min_{f(x, y)} E_{p(x, y)}[-f(x, y)] + \log \sum_{x'} e^{f(x', y)}.
\end{align*}
\end{proof}

This lemma also holds for functions with more variables, a result that will be useful when proving dynamics robustness.
\begin{lemma} \label{lemma:fenchel-2}
\begin{equation*}
E_{p(x, y, z)}[-\log p(x, y \mid z)] = \min_{f(x, y, z)} E_{p(x, y, z)} \left[ -f(x, y, z) + \log \sum_{x', y'} e^{f(x', y', z)} \right].
\end{equation*}
\end{lemma}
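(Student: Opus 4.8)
The plan is to derive this identity as an immediate corollary of Lemma~\ref{lemma:fenchel} by treating the pair $(x,y)$ as a single variable. First I would set $w \triangleq (x,y)$, ranging over the Cartesian product of the $x$- and $y$-alphabets, and keep $z$ as the conditioning variable. Under this identification $p(x,y\mid z)$ becomes $p(w\mid z)$, the inner sum $\sum_{x',y'} e^{f(x',y',z)}$ becomes $\sum_{w'} e^{g(w',z)}$ with $g(w,z) \triangleq f(x,y,z)$, and — crucially — minimizing over all real-valued functions $f$ of the three arguments is exactly the same as minimizing over all real-valued functions $g$ of the two arguments $(w,z)$, since $f \mapsto g$ is a bijection between these function classes. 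With these substitutions the right-hand side of the claim is literally the right-hand side of Eq.~\ref{eq:identity} with $(w,z)$ in place of $(x,y)$, so Lemma~\ref{lemma:fenchel} gives the result directly.

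The one point I would check carefully is that the proof of Lemma~\ref{lemma:fenchel} never used any structure of the alphabets of $x$ or $y$ beyond being able to form the sum $\sum_{x'}$ and to differentiate the objective coordinate-wise; inspecting that proof confirms this, so replacing the atomic index $x$ by the composite index $w=(x,y)$ and the conditioning variable $y$ by $z$ is legitimate. If a self-contained argument were preferred over the relabelling, one could instead repeat the calculus-of-variations computation verbatim: differentiate the RHS with respect to $f(x,y,z)$ to obtain $-p(x,y,z) + e^{f(x,y,z)}/\sum_{x',y'} e^{f(x',y',z)}$, deduce that the minimizer satisfies $e^{f^*(x,y,z)}/\sum_{x',y'} e^{f^*(x',y',z)} = p(x,y\mid z)$, take logs, multiply by $-1$, and take the expectation under $p(x,y,z)$.

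There is essentially no hard step here: the content is entirely in Lemma~\ref{lemma:fenchel}, and the only thing to note is that the log-sum-exp / negative-entropy duality is indifferent to whether the summation variable is a single symbol or a tuple. The lemma is stated separately purely for convenience in the dynamics-robustness proof, where ``$x,y$'' will be instantiated as the next state and action and ``$z$'' as the current state.
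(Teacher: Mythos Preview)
Your proposal is correct and matches the paper's own proof exactly: the paper simply says ``apply Lemma~\ref{lemma:fenchel} to the random variables $(x,y)$ and $z$,'' which is precisely your relabelling $w\triangleq(x,y)$. Your additional remarks about the bijection between function classes and the alternative self-contained derivation are sound but unnecessary for the argument.
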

\begin{proof}
Simply apply Lemma~\ref{lemma:fenchel} to the random variables $(x, y)$ and $z$.
\end{proof}

\subsection{Proof of Theorem~\ref{thm:reward-robustness}}
\label{appendix:reward-robustness-proof}
This section will provide a proof of Theorem~\ref{thm:reward-robustness}. 
\begin{proof}
We start by restating the objective for robust control with rewards:
\begin{equation}
    \min_{\tilde{r} \in \tilde{\gR}(\pi)} \E_{\substack{\ca \sim \pi(\ca \mid \cs) \\ \ns \sim p(\ns \mid \cs, \ca)}}\left[\sum_t \tilde{r}(\cs, \ca) \right]. \label{eq:reward-robust-proof-1}
\end{equation}
We now employ the KKT conditions~\citep[Chpt. 5.5.3]{boyd2004convex}.
If the robust set $\tilde{\gR}$ is strictly feasibly, then there exists a constraint value $\epsilon \ge 0$ (the dual solution) such that the constrained problem is equivalent to the relaxed problem with Lagrange multiplier $\lambda = 1$:
\begin{equation*}
    \min_{\tilde{r}} \E_{\substack{\ca \sim \pi(\ca \mid \cs) \\ \ns \sim p(\ns \mid \cs, \ca)}}\left[\sum_t \tilde{r}(\cs, \ca) + \log \int_{\gA} \exp( r(\cs, \ca) - \tilde{r}(\cs, \ca)) d\ca' \right].
\end{equation*}
To clarify exposition, we parameterize $\tilde{r}$ by its deviation from the original reward, $\Delta r(\cs, \ca) \triangleq r(\cs, \ca) - \tilde{r}(\cs, \ca)$. We also define $\rho_t^\pi(\cs, \ca)$ as the marginal distribution over states and actions visited by policy $\pi$ at time $t$.
We now rewrite the relaxed objective in terms of $\Delta r(\cs, \ca)$:
{\footnotesize
\begin{align*}
    & \min_{\Delta r} \E_{\substack{\ca \sim \pi(\ca \mid \cs) \\ \ns \sim p(\ns \mid \cs, \ca)}} \Big[\sum_t r(\cs, \ca) - \Delta r(\cs, \ca) + \log \int_{\gA} \exp(\Delta r(\cs, \ca)) d\ca' \Big] \\
    & \qquad = \E_{\substack{\ca \sim \pi(\ca \mid \cs) \\ \ns \sim p(\ns \mid \cs, \ca)}} \Big[\sum_t r(\cs, \ca) \Big] \\
    & \qquad \qquad + \min_{\Delta r} \E_{\substack{\ca \sim \pi(\ca \mid \cs) \\ \ns \sim p(\ns \mid \cs, \ca)}} \Big[\sum_t- \Delta r(\cs, \ca) + \log \int_{\gA} \exp(\Delta r(\cs, \ca)) d\ca' \Big]. \\
    & \qquad = \E_{\substack{\ca \sim \pi(\ca \mid \cs) \\ \ns \sim p(\ns \mid \cs, \ca)}} \Big[\sum_t r(\cs, \ca) \Big] \\
    & \qquad \qquad + \min_{\Delta r} \sum_t \E_{\rho_t^\pi(\cs, \ca)} \Big[- \Delta r(\cs, \ca) + \log \int_{\gA} \exp(\Delta r(\cs, \ca)) d\ca' \Big]. \\
    & \qquad = \E_{\substack{\ca \sim \pi(\ca \mid \cs) \\ \ns \sim p(\ns \mid \cs, \ca)}} \Big[\sum_t r(\cs, \ca) - \log \pi(\ca \mid \cs) \Big].
\end{align*}}
The last line follows from applying Lemma~\ref{lemma:fenchel}.
\end{proof}
One important note about this proof is that the KKT conditions are applied to the problem of optimizing the reward function $\tilde{r}$, which is a convex optimization problem. We do not require that the policy $\pi(\ca \mid \cs)$ or the dynamics $p(\ns \mid \cs, \ca)$ be convex.

\subsection{Worked Example of Reward Robustness}
\label{appendix:reward-robustness-example}
We compute the penalty for Example 1 in Section~\ref{sec:worked-examples} using a Gaussian integral:
\begin{align*}
    \textsc{Penalty}(\tilde{r}, \mathbf{s}) &= \log \int_{-10}^{10} \exp(r(\mathbf{s}, \mathbf{a}) - \tilde{r}(\mathbf{s}, \mathbf{a})) d\mathbf{a} \\
    &= \log \int_{-10}^{10} \exp(-(\mathbf{a} - \mathbf{a}^*)^2 + \frac{1}{2} (\mathbf{a} - (\mathbf{a}^* + \Delta \mathbf{a}))^2) d\mathbf{a} \\
    &= \log \int_{-10}^{10} \exp(-\frac{1}{2}(\mathbf{a} - (\mathbf{a}^* + \Delta \mathbf{a}))^2 + \Delta \mathbf{a}^2) d\mathbf{a} \\
    &= \Delta \mathbf{a}^2 + \frac{1}{2} \log (2 \pi) + \log (20).
\end{align*}

\subsection{Proof of Theorem~\ref{thm:dynamics-robustness}}
\label{appendix:dynamics-robustness-proof}

\begin{proof}
We now provide the proof of Theorem~\ref{thm:dynamics-robustness}. The first step will be to convert the variation in dynamics into a sort of variation in the reward function. The second step will convert the constrained robust control objective into an unconstrained (penalized) objective. The third step will show that the penalty is equivalent to action entropy.

\paragraph{Step 1: Dynamics variation $\rightarrow$ reward variation.}
Our aim is to obtain a lower bound on the following objective:
\begin{equation*}
    \min_{\tilde{p} \in \tilde{\gP}(\pi)} \E_{\substack{\ca \sim \pi(\ca \mid \cs) \\ \ns \sim \tilde{p}(\ns \mid \cs, \ca)}}\left[\sum_t r(\cs, \ca) \right].
\end{equation*}
We start by taking a log-transform of this objective, noting that this transformation does not change the optimal policy as the log function is strictly monotone increasing.
\begin{equation*}
    \log \E_{\substack{\ca \sim \pi(\ca \mid \cs) \\ \ns \sim \tilde{p}(\ns \mid \cs, \ca)}}\left[\sum_t r(\cs, \ca) \right].
\end{equation*}
Note that we assumed that the reward was positive, so the logarithm is well defined. We can write this objective in terms of the adversarial dynamics using importance weights. 
\footnotesize\begin{align}
    & \log \; \E_{\substack{\ca \sim \pi(\ca \mid \cs) \\ \ns \sim p(\ns \mid \cs, \ca)}} \left[\left(\prod_t \frac{\tilde{p}(\ns \mid \cs, \ca)}{p(\ns \mid \cs, \ca)} \right)\sum_t r(\cs, \ca) \right] \nonumber \\
    & = \log \E_{\substack{\ca \sim \pi(\ca \mid \cs) \\ \ns \sim p(\ns \mid \cs, \ca)}}\left[\exp \left(\log \left(\sum_t r(\cs, \ca) \right) + \sum_t \log \tilde{p}(\ns \mid \cs, \ca) - \log p(\ns \mid \cs, \ca)  \right) \right] \nonumber \\
    & \stackrel{(a)}{\ge} \E_{\substack{\ca \sim \pi(\ca \mid \cs) \\ \ns \sim p(\ns \mid \cs, \ca)}}\left[\log \left(\sum_t r(\cs, \ca) \right) + \sum_t \log \tilde{p}(\ns \mid \cs, \ca) - \log p(\ns \mid \cs, \ca) \right] \nonumber \\
    & = \E_{\substack{\ca \sim \pi(\ca \mid \cs) \\ \ns \sim p(\ns \mid \cs, \ca)}}\left[\log \left(\frac{1}{T} \sum_t r(\cs, \ca) \right) + \log T + \sum_t \log \tilde{p}(\ns \mid \cs, \ca) - \log p(\ns \mid \cs, \ca) \right] \nonumber \\
    & \stackrel{(b)}{\ge} \E_{\substack{\ca \sim \pi(\ca \mid \cs) \\ \ns \sim p(\ns \mid \cs, \ca)}}\left[ \sum_t \frac{1}{T}\log r(\cs, \ca) + \log \tilde{p}(\ns \mid \cs, \ca) - \log p(\ns \mid \cs, \ca) \right] + \log T. \label{eq:robust-lb}
\end{align}\normalsize
Both inequalities are applications of Jensen's inequality. As before, our assumption that the rewards are positive ensures that the logarithms remain well defined. While the adversary is choosing the dynamics under which we will evaluate the policy, we are optimizing a lower bound which depends on a \emph{different} dynamics function. \emph{This step allows us to analyze adversarially-chosen dynamics as perturbations to the reward}. 

\paragraph{Step 2: Relaxing the constrained objective}

To clarify exposition, we will parameterize the adversarial dynamics as a deviation from the true dynamics:
\begin{equation}
    \Delta r(\ns, \cs, \ca) = \log p(\ns \mid \cs, \ca) - \log \tilde{p}(\ns \mid \cs, \ca). \label{eq:delta-r}
\end{equation}
The constraint that the adversarial dynamics integrate to one can be expressed as
\begin{equation*}
    \int_{\gS} \underbrace{p(\ns \mid \cs, \ca) e^{-\Delta r(\ns, \cs, \ca)}}_{\tilde{p}(\ns \mid \cs, \ca)} d \ns = 1
\end{equation*}

Using this notation, we can write the lower bound on the robust control problem (Eq.~\ref{eq:robust-lb}) as follows:
\begin{align}
    & \min_{\Delta r} \E_{\substack{\ca \sim \pi(\ca \mid \cs),\\ \ns \sim p(\ns \mid \cs, \ca)}}\left[\sum_t \frac{1}{T} \log r(\cs, \ca) - \Delta r(\ns, \cs, \ca) \right] + \log T \label{eq:step-2-obj} \\
    & \text{s.t.} \quad \E_{\substack{\ca \sim \pi(\ca \mid \cs),\\ \ns \sim p(\ns \mid \cs, \ca)}}\left[\sum_t \log \iint_{\gA\times\gS} e^{\Delta r(\ns', \cs, \ca')} d \ca' d\ns' \right] \le \epsilon \label{eq:step-2-constraint} \\
    & \text{and} \quad \int_{\gS} \underbrace{p(\ns \mid \cs, \ca) e^{-\Delta r(\ns, \cs, \ca)}}_{\tilde{p}(\ns \mid \cs, \ca)} d \ns = 1 \quad \forall \cs, \ca. \label{eq:step-2-constraint-2}
\end{align}
The constraint in Eq.~\ref{eq:step-2-constraint} is the definition of the set of adversarial dynamics $\tilde{\gP}$, and the constraint in Eq.~\ref{eq:step-2-constraint-2} ensures that $\tilde{p}(\ns \mid \cs, \ca)$ represents a valid probability density function.

Note that $\Delta r(\ns, \cs, \ca) = 0$ is a  \emph{strictly feasible} solution to this constrained optimization problem for $\epsilon > 0$.
Note also that the problem of optimizing the function $\Delta r(\cs, \ca)$ is a convex optimization problem. 
We can therefore employ the KKT conditions~\citep[Chpt. 5.5.3]{boyd2004convex}.
If the robust set $\tilde{\gP}$ is strictly feasibly, then there exists $\epsilon \ge 0$ (the dual solution) such that the set of solutions $\tilde{p}$ to the constrained optimization problem Eq.~\ref{eq:step-2-obj} are equivalent to the set of solutions to the following relaxed objective with Lagrange multiplier $\lambda = 1$:
\footnotesize
\begin{align}
    & \min_{\Delta r} \E_{\substack{\ca \sim \pi(\ca \mid \cs),\\ \ns \sim p(\ns \mid \cs, \ca)}}\left[\sum_t \frac{1}{T} \log r(\cs, \ca) - \Delta r(\ns, \cs, \ca) \right. \\
    & \left. \hspace{10em} + \log \iint_{\gA\times\gS} e^{\Delta r(\ns', \cs, \ca')} d \ca' d\ns' \right] + \log T \nonumber \\
    & \text{s.t.} \quad \int_{\gS} p(\ns \mid \cs, \ca) e^{-\Delta r(\ns', \cs, \ca')} d \ns = 1 \quad \forall \cs, \ca. \label{eq:relaxed-constraint}
\end{align}\normalsize
This step does not require that the policy $\pi(\ca \mid \cs)$ or the dynamics $p(\ns \mid \cs, \ca)$ be convex.

Our next step is to show that the constraint does not affect the solution of this optimization problem. For any function $\Delta r(\ns, \cs, \ca)$, we can add a constant $c(\cs, \ca)$ and obtain the same objective value but now satisfy the constraint. We construct $c(\cs, \ca)$ as
\begin{equation*}
    c(\cs, \ca) = \log \int_{\gS} p(\ns \mid \cs, \ca) e^{-\Delta r(\ns', \cs, \ca')} d \ns = 1 \quad \forall \cs, \ca.
\end{equation*}
First, we observe that adding  $c(\cs, \ca)$ to $\Delta r$ does not change the objective:
\begin{align*}
    & \E_{\substack{\ca \sim \pi(\ca \mid \cs),\\ \ns \sim p(\ns \mid \cs, \ca)}}\left[\sum_t \frac{1}{T} \log r(\cs, \ca) - (\Delta r(\ns, \cs, \ca) + c(\cs, \ca)) \right. \\
    & \hspace{10em} \left. + \log \iint_{\gA\times\gS} e^{\Delta r(\ns', \cs, \ca') + c(\cs, \ca)} d \ca' d\ns' \right] + \log T \\
    & \qquad = \E_{\substack{\ca \sim \pi(\ca \mid \cs),\\ \ns \sim p(\ns \mid \cs, \ca)}}\left[\sum_t \frac{1}{T} \log r(\cs, \ca) - \Delta r(\ns, \cs, \ca) - c(\cs, \ca) \right. \\
    & \hspace{10em} \left. + \log \left(e^{c(\cs, \ca)} \iint_{\gA\times\gS} e^{\Delta r(\ns', \cs, \ca')} d \ca' d\ns' \right) \right] + \log T \\
    & \qquad = \E_{\substack{\ca \sim \pi(\ca \mid \cs),\\ \ns \sim p(\ns \mid \cs, \ca)}}\left[\sum_t \frac{1}{T} \log r(\cs, \ca) - \Delta r(\ns, \cs, \ca) - \cancel{c(\cs, \ca)} + \cancel{c(\cs, \ca)} \right. \\
    & \hspace{10em} \left. + \log \left( \iint_{\gA\times\gS} e^{\Delta r(\ns', \cs, \ca')} d \ca' d\ns' \right) \right] + \log T.
\end{align*} \normalsize
Second, we observe that the new reward function $\Delta r(\ns, \cs, \ca) + c(\cs, \ca)$ satisfies the constraint in Eq.~\ref{eq:relaxed-constraint}:
\begin{align*}
    & \int_{\gS} p(\ns \mid \cs, \ca) e^{-(\Delta r(\ns', \cs, \ca') - c(\cs, \ca))} d \ns \\
    &\qquad = e^{-c(\cs, \ca)} \int_{\gS} p(\ns \mid \cs, \ca) e^{-\Delta r(\ns', \cs, \ca')} d \ns \\
    & \qquad = \frac{ \int_{\gS} p(\ns \mid \cs, \ca) e^{-\Delta r(\ns', \cs, \ca')} d \ns}{\int_{\gS} p(\ns \mid \cs, \ca) e^{-\Delta r(\ns', \cs, \ca')} d \ns} = 1.
\end{align*}
Thus, constraining $\Delta r$ to represent a probability distribution does not affect the solution (value) to the optimization problem, so we can ignore this constraint without loss of generality. The new, unconstrained objective is
\begin{align}
    & \min_{\Delta r} \E_{\substack{a \sim \pi(\ca \mid \cs),\\s' \sim p(\ns \mid \cs, \ca)}}\left[\sum_t \frac{1}{T} \log r(\cs, \ca) - \Delta r(\ns, \cs, \ca) \right. \nonumber \\
    & \left. \hspace{10em} + \log \iint_{\gA\times\gS} e^{\Delta r(\ns, \cs, \ca)} d \ca' d\ns' \right] + \log T. \label{eq:dynamics-step-2}
\end{align}

\paragraph{Step 3: The penalty is the Fenchel dual of action entropy.}

We define $\rho_t^\pi(\cs, \ca, \ns)$ as the marginal distribution of transitions visited by policy $\pi$ at time $t$.
We now apply Lemma~\ref{lemma:fenchel-2} to Eq.~\ref{eq:dynamics-step-2}:
\footnotesize \begin{align}
    & \min_{\Delta r} \E_{\substack{a \sim \pi(\ca \mid \cs),\\s' \sim p(\ns \mid \cs, \ca)}}\left[\sum_t \frac{1}{T} \log r(\cs, \ca) - \Delta r(\ns, \cs, \ca) \right. \nonumber \\
    & \hspace{10em} \left. + \log \iint_{\gA\times\gS} e^{\Delta r(\ns, \cs', \ca')} d \ca' d\ns' \right] + \log T \label{eq:step-3}\\
    & \qquad = \E_{\substack{a \sim \pi(\ca \mid \cs),\\s' \sim p(\ns \mid \cs, \ca)}}\left[\sum_t \frac{1}{T} \log r(\cs, \ca) \right] + \log T \nonumber \\
        & \qquad \qquad + \min_{\Delta r} \E_{\substack{a \sim \pi(\ca \mid \cs),\\s' \sim p(\ns \mid \cs, \ca)}}\left[\sum_t- \Delta r(\ns, \cs, \ca) + \log \iint_{\gA\times\gS} e^{\Delta r(\ns, \cs', \ca')} d \ca' d\ns' \right] \nonumber \\
    & \qquad = \E_{\substack{a \sim \pi(\ca \mid \cs),\\s' \sim p(\ns \mid \cs, \ca)}}\left[\sum_t \frac{1}{T} \log r(\cs, \ca) \right] + \log T \nonumber \\
        & \qquad \qquad + \min_{\Delta r} \sum_t \E_{\rho_t^\pi(\cs, \ca, \ns)}\left[- \Delta r(\ns, \cs, \ca) + \log \iint_{\gA\times\gS} e^{\Delta r(\ns, \cs', \ca')} d \ca' d\ns' \right] \nonumber \\
    & \qquad = \E_{\substack{a \sim \pi(\ca \mid \cs),\\s' \sim p(\ns \mid \cs, \ca)}}\left[\sum_t \frac{1}{T} \log r(\cs, \ca) - \log p(\ca, \ns \mid \cs) \right] + \log T \nonumber \\
    & \qquad = \E_{\substack{a \sim \pi(\ca \mid \cs),\\s' \sim p(\ns \mid \cs, \ca)}}\left[\sum_t \frac{1}{T} \log r(\cs, \ca) - \log \pi(\ca \mid \cs) - \log p(\ns \mid \cs, \ca) \right] + \log T \nonumber 
\end{align} \normalsize

\paragraph{Summary.}
We have thus shown the follow:
\begin{equation*}
    \min_{\tilde{p} \in \tilde{\gP}} \log J_\text{MaxEnt}(\pi; \tilde{p}, r) \ge J_\text{MaxEnt}(\pi; p, \bar{r}) + \log T.
\end{equation*}
Taking the exponential transform of both sides, we obtain the desired result.

\end{proof}

\subsection{Robustness to Both Rewards and Dynamics}
\label{appendix:robust-to-both}
While Theorem~\ref{thm:dynamics-robustness} is phrased in terms of perturbations to the dynamics function, not the reward function, we now discuss how this result can be used to show that MaxEnt RL is simultaneously robust to perturbations in the dynamics and the reward function. Define a modified MDP where the reward is appended to the observation. Then the reward function is the last coordinate of the observation. In this scenario, robustness to dynamics is equivalent to robustness to rewards.

\subsection{How Big is the Robust Set ($\epsilon$)?}
\label{appendix:epsilon}

Our proof of Theorem~\ref{thm:dynamics-robustness} used duality to argue that there exists an $\epsilon$ for which MaxEnt RL maximizes a lower bound on the robust RL objective. However, we did not specify the size of this $\epsilon$. This raises the concern that $\epsilon$ might be arbitrarily small, even zero, in which case the result would be vacuous. In this section we provide a proof of Lemma~\ref{lemma-epsilon}, which provides a lower bound on the size of the robust set.

\begin{proof}
Our proof proceeds in three steps. We first argue that, at optimality, the constraint on the adversarial dynamics is tight. This will allow us to treat the constraint as an equality constraint, rather than an inequality constraint. Second, since this constraint holds with equality, then we can rearrange the constraint and solve for $\epsilon$ in terms of the optimal adversarial dynamics. The third step is to simplify the expression for $\epsilon$.

\paragraph{Step 1: The constraint on $\Delta r$ holds with equality.}
Our aim here is to show that the constraint on $\Delta r$ in Eq.~\ref{eq:step-2-constraint} holds with equality for the optimal $\Delta r$ (i.e., that which optimizes Eq.~\ref{eq:step-2-obj}).\footnote{In the case where there are multiple optimal $\Delta r$, we require that the constraint hold with equality for at least one of the optimal $\Delta r$.}
The objective in Eq.~\ref{eq:step-2-obj} is linear in $\Delta r$, so there must exist optimal $\Delta r$ at the boundary of the constraint~\citep[Chapter 32]{rockafellar}.

\paragraph{Step 2: Solving for $\epsilon$.}
Since the solution to the constrained optimization problem in Eq.~\ref{eq:step-2-obj} occurs at the boundary, the constraint in Eq.~\ref{eq:step-2-constraint} holds with equality, immediately telling us the value of $\epsilon$:
\begin{align}
    \epsilon &= \E_{\substack{\ca \sim \pi(\ca \mid \cs),\\ \ns \sim p(\ns \mid \cs, \ca)}}\left[\sum_t \log \iint_{\gA\times\gS} e^{\Delta r(\ns', \cs, \ca')} d \ca' d\ns' \right] \nonumber \\
    &=  T \cdot \E_{\rho(\cs)} \left[\log \iint_{\gA\times\gS} e^{\Delta r(\ns', \cs, \ca')} d \ca' d\ns' \right], \label{eq:epsilon-1}
\end{align}
where $\Delta r$ is the solution to Eq.~\ref{eq:step-3}. This identity holds for all states $\cs$. The second line above expresses the expectation over trajectories as an expectation over states, which will simplify the analysis in the rest of this proof. The factor of $T$ is introduced because we have removed the inner summation.

\paragraph{Step 3: Simplifying the expression for $\epsilon$.}
To better understand this value of $\epsilon$, we recall that the following identity (Lemma~\ref{lemma:fenchel-2}) holds for this optimal $\Delta r$:
\begin{equation*}
    \frac{e^{\Delta r(\cs, \ca, \ns)}}{\iint_{\gA\times\gS} e^{\Delta r(\cs, \ca', \ns')} d\ca', d\ns'} = \rho(\ca, \ns \mid \cs) \qquad \forall \cs, \ca, \ns.
\end{equation*}
We next take the $\log(\cdot)$ of both sides and rearrange terms:
\begin{equation*}
    \log \iint_{\gA\times\gS} e^{\Delta r(\cs, \ca', \ns')} d\ca', d\ns' = \Delta r(\cs, \ca, \ns) - \log \rho(\ca, \ns \mid \cs).
\end{equation*}
Next, we substitute the definition of $\Delta r$ (Eq.~\ref{eq:delta-r}) and factor $\log \rho(\ca, \ns \mid \cs)$:
\begin{align*}
    \log \iint_{\gA\times\gS} & e^{\Delta r(\cs, \ca', \ns')} d\ca', d\ns' \\
    &= \log p(\ns \mid \cs, \ca) - \log \tilde{p}(\ns \mid \cs, \ca) - \log p(\ns \mid \cs, \ca) - \log \pi(\ca \mid \cs) \\
    &= -\log \tilde{p}(\ns \mid \cs, \ca) - \log \pi(\ca \mid \cs).
\end{align*}
Substituting this expression into Eq.~\ref{eq:epsilon-1}, we obtain a more intuitive expression for $\epsilon$:
\begin{align*}
    \epsilon &= T \cdot \E_{\substack{\ca \sim \pi(\ca \mid \cs),\\ \ns \sim p(\ns \mid \cs, \ca)}}\left[ -\log \tilde{p}(\ns \mid \cs, \ca) - \log \pi(\ca \mid \cs) \right] \\
    &= T \cdot \E_{\substack{\ca \sim \pi(\ca \mid \cs),\\ \ns \sim p(\ns \mid \cs, \ca)}}\left[ \gH_{\tilde{p}}[\ns \mid \cs, \ca] + \gH_\pi[\ca \mid \cs] \right] \\
    &\ge T \cdot \E_{\substack{\ca \sim \pi(\ca \mid \cs),\\ \ns \sim p(\ns \mid \cs, \ca)}}\left[ \gH_\pi[\ca \mid \cs] \right].
\end{align*}
The last line follows from our assumption that the state space is discrete, so the entropy $\gH_{\tilde{p}}[\ns \mid \cs, \ca]$ is non-negative. This same result will hold in environments with continuous state spaces as long as the (differential) entropy $\gH_{\tilde{p}}[\ns \mid \cs, \ca]$ is non-negative.
\end{proof}

\subsection{Worked Example of Dynamics Robustness}
\label{appendix:dynamics-robustness-example}

We calculate the penalty using a Gaussian integral:
\footnotesize\begin{align*}
    \log \iint_{\gA\times\gS} & \frac{p(\ns \mid \cs, \ca)}{\tilde{p}(\ns \mid \cs, \ca)} d\ca d\ns \\
    &= \log \iint_{\gA\times\gS} 2 \exp (-\frac{1}{2} (\ns - (A\cs + B\ca))^2 + \frac{1}{4} (\ns - (A\cs + B\ca - \beta))^2) d\ca d\ns \\
    &= \log \iint_{\gA\times\gS} 2\exp(-\frac{1}{4}(\ns - (A\cs + B\ca - \beta))^2 + \frac{1}{2} \beta^2) d\ca d\ns \\
    &= \log \left( 2 \sqrt{4 \pi} \int_{-10}^{10} \exp(\frac{1}{2} \beta^2) d\ca\right) \\
    &= \frac{1}{2} \beta^2 + \log(8 \sqrt{\pi}) + \log(20).
\end{align*}\normalsize

We make two observations about this example. First, the penalty would be infinite if the adversary dynamics $\tilde{p}$ had the same variance as the true dynamics, $p$: the adversary must choose dynamics with higher variance. Second, the penalty depends on the size of the state space. More precisely, the penalty depends on the region over which the adversary applies their perturbation. The adversary can incur a smaller penalty by applying the perturbation over a smaller range of states.

\subsection{Temperatures}
\label{sec:temperatures}
Many algorithms for MaxEnt RL~\citep{haarnoja2018soft, fox2015taming, nachum2017bridging} include a temperature $\alpha > 0$ to balance the reward and entropy terms:
\begin{equation*}
    J_\text{MaxEnt}(\pi, r) = \E_\pi \left[ \sum_{t=1}^T r(\cs, \ca) \right] + \alpha \gH_\pi[\ca \mid \cs] .
\end{equation*}
We can gain some intuition into the effect of this temperature on the set of reward functions to which we are robust. In particular, including a temperature $\alpha$ results in the following robust set:
\begin{align}
    R_r^\alpha &= \left \{r(\cs, \ca) + \alpha u_\cs(\ca) \mid u_\cs(\ca) \ge 0 \; \forall \cs, \ca \; \text{and} \; \int_{\gA} e^{-u_\cs(\ca)} d\ca \le 1 \; \forall \cs \right \} \nonumber \\
    &= \left \{r(\cs, \ca) + u_\cs(\ca) \mid u_\cs(\ca) \ge 0 \; \forall \cs, \ca \; \text{and} \; \int_{\gA} e^{-u_\cs(\ca) / \alpha} d\ca \le 1 \; \forall \cs \right \} . \label{eq:robust-set-temp}
\end{align}
In the second line, we simply moved the temperature from the objective to the constraint by redefining $u_\cs(\ca) \rightarrow \frac{1}{\alpha} u_\cs(\ca)$. 

\begin{figure}[t]
    \centering
    \includegraphics[width=0.7\linewidth]{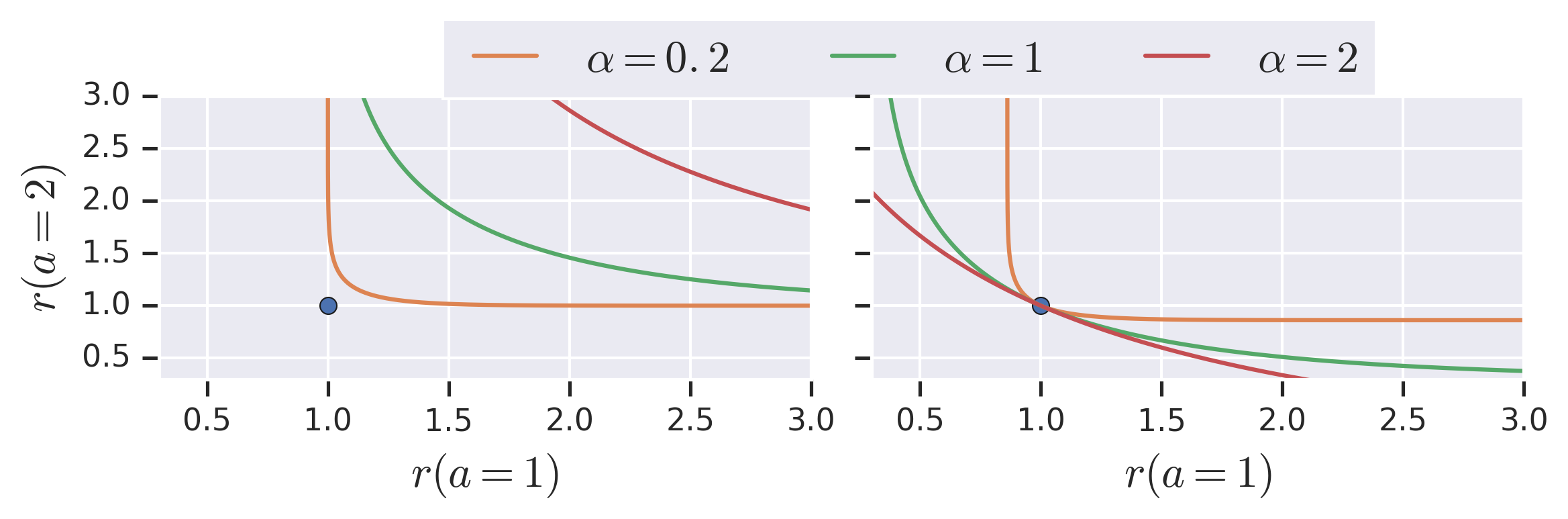}
    \caption{\textbf{Effect of Temperature}: \figleft \; For a given reward function (blue dot), we plot the robust sets for various values of the temperature. Somewhat surprisingly, it appears that increasing the temperature decreases the set of reward functions that MaxEnt is robust against. \figright \; We examine the opposite: for a given reward function, which other robust sets might contain this reward function. We observe that robust sets corresponding to larger temperatures (i.e., the red curve) can be simultaneously robust against more reward functions than robust sets at lower temperatures. \label{fig:temperature}
    }
\end{figure}
We visualize the effect of the temperature in Fig.~\ref{fig:temperature}. First, we fix a reward function $r$, and plot the robust set $R_r^\alpha$ for varying values of $\alpha$. Fig.~\ref{fig:temperature} (left) shows the somewhat surprising result that increasing the temperature (i.e., putting more weight on the entropy term) makes the policy \emph{less} robust. In fact, the robust set for higher temperatures is a strict subset of the robust set for lower temperatures:
\begin{equation*}
    \alpha_1 < \alpha_2 \implies R_r^{\alpha_2} \subseteq R_r^{\alpha_2} .
\end{equation*}
This statement can be proven by simply noting that the function $e^{-\frac{x}{\alpha}}$ is an increasing function of $\alpha$ in Equation~\ref{eq:robust-set-temp}.
It is important to recognize that being robust against more reward functions is not always desirable. In many cases, to be robust to everything, an optimal policy must do nothing.

We now analyze the temperature in terms of the converse question: if a reward function $r'$ is included in a robust set, what other reward functions are included in that robust set? To do this, we take a reward function $r'$, and find robust sets $R_r^\alpha$ that include $r'$, for varying values of $\alpha$. As shown in Fig.~\ref{fig:temperature} (right), if we must be robust to $r'$ and use a high temperature, the only other reward functions to which we are robust are those that are similar, or pointwise weakly better, than $r'$. In contrast, when using a small temperature, we are robust against a wide range of reward functions, including those that are highly dissimilar from our original reward function (i.e., have higher reward for some actions, lower reward for other actions). Intuitively, increasing the temperature allows us to simultaneously be robust to a larger set of reward functions.

\subsection{MaxEnt Solves Robust Control for Rewards}
\label{sec:robust-reward-control}
In Sec.~\ref{sec:robust-control-rewards}, we showed that MaxEnt RL is equivalent to some robust-reward problem. The aim of this section is to go backwards: given a set of reward functions, can we formulate a MaxEnt RL problem such that the robust-reward problem and the MaxEnt RL problem have the same solution?
\begin{lemma}
For any collection of reward functions $R$, there exists another reward function $r$ such that the MaxEnt RL policy w.r.t. $r$ is an optimal robust-reward policy for $R$:
\begin{equation*}
    \argmax_\pi \E_\pi \left[ \sum_{t=1}^T r(\cs, \ca) \right] + \gH_\pi[a \mid s] \subseteq \argmax_\pi \min_{r' \in R} \E_\pi \left[ \sum_{t=1}^T r'(\cs, \ca) \right] .
\end{equation*}
\end{lemma}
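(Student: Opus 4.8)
The plan is to sidestep Theorem~\ref{thm:reward-robustness} and instead construct $r$ directly from a robust‑optimal policy. Write $F(\pi) \triangleq \min_{r' \in R}\E_\pi\big[\sum_{t=1}^T r'(\cs,\ca)\big]$ for the robust‑reward objective. First I would note that $F$ attains its supremum over policies under mild conditions (e.g.\ a finite MDP, where the set of realizable state–action occupancy measures is a compact polytope and each $\pi \mapsto \E_\pi[\sum_t r'(\cs,\ca)]$ is continuous, so $F$, an infimum of affine functions, is upper semicontinuous); fix a maximizer $\hat\pi \in \argmax_\pi F(\pi)$. I would then take the "imitate $\hat\pi$" reward
\[ r(\cs,\ca) \triangleq \log \hat\pi(\ca \mid \cs). \]

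The key step is that this choice turns the MaxEnt objective into a KL divergence. Using $J_\text{MaxEnt}(\pi;p,r) = \E_{\tau \sim p^\pi}\big[\sum_t \big(r(\cs,\ca) - \log\pi(\ca\mid\cs)\big)\big]$ and writing $\rho_t^\pi$ for the state marginal of $\pi$ at step $t$, substituting $r=\log\hat\pi$ gives
\[ J_\text{MaxEnt}(\pi;p,\log\hat\pi) = -\sum_{t=1}^T \E_{\rho_t^\pi(\cs)}\!\left[\KL\infdivx{\pi(\cdot\mid\cs)}{\hat\pi(\cdot\mid\cs)}\right] \le 0, \]
with equality iff $\pi(\cdot\mid\cs)=\hat\pi(\cdot\mid\cs)$ for every state $\cs$ reachable under $\pi$. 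In particular $\hat\pi$ attains the maximum value $0$, so $\hat\pi \in \argmax_\pi J_\text{MaxEnt}(\pi;p,r)$ is nonempty, and every maximizer $\pi^\star$ agrees with $\hat\pi$ on all states it visits.

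To close the argument I would show such a $\pi^\star$ is itself robust‑optimal, via a one‑step induction on $t$: $\rho_1^{\pi^\star} = p_1 = \rho_1^{\hat\pi}$, and if $\rho_t^{\pi^\star} = \rho_t^{\hat\pi}$ then, since $\pi^\star$ and $\hat\pi$ coincide on every state in the support of this marginal, pushing forward through $p(\ns\mid\cs,\ca)$ yields $\rho_{t+1}^{\pi^\star} = \rho_{t+1}^{\hat\pi}$. Hence $\pi^\star$ and $\hat\pi$ induce identical state–action occupancy measures, so $\E_{\pi^\star}[\sum_t r'(\cs,\ca)] = \E_{\hat\pi}[\sum_t r'(\cs,\ca)]$ for every $r'\in R$ and therefore $F(\pi^\star) = F(\hat\pi) = \max_\pi F(\pi)$, i.e.\ $\pi^\star \in \argmax_\pi F(\pi)$. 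This is exactly the claimed inclusion $\argmax_\pi J_\text{MaxEnt}(\pi;p,r) \subseteq \argmax_\pi \min_{r'\in R}\E_\pi[\sum_t r'(\cs,\ca)]$.

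The main obstacle is the support of $\hat\pi$: when no robust‑optimal policy has full support (for instance $R=\{r'\}$ a single reward with a unique deterministic optimum), $r=\log\hat\pi$ takes the value $-\infty$ on zero‑probability actions. One must therefore either allow reward functions valued in $[-\infty,\infty)$ — harmless here, since the MaxEnt optimum of $\log\hat\pi$ is still well defined and equals $\hat\pi$ on visited states — or settle for a limiting statement through $\hat\pi_\eps = (1-\eps)\hat\pi + \eps\,\mathrm{Unif}$. A secondary, routine point is the existence of $\hat\pi$, which I would handle with the compactness/upper‑semicontinuity remark above (extending to continuous spaces would require the appropriate topology on occupancy measures). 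It is worth noting that this proof runs the MaxEnt–robustness correspondence of Theorem~\ref{thm:reward-robustness} in the reverse direction and does not invoke it.
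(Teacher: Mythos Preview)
Your proposal is correct and follows exactly the paper's construction: pick a robust-optimal policy $\hat\pi$ and set $r(\cs,\ca)=\log\hat\pi(\ca\mid\cs)$, so that the MaxEnt objective collapses to a negative KL to $\hat\pi$. Your write-up is in fact more careful than the paper's three-line argument, which simply asserts that ``the unique solution is $\pi=\pi^*$'' without addressing the support issue you flag ($\log\hat\pi=-\infty$ off-support) or the point that MaxEnt maximizers are only pinned down on reachable states.
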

We use set containment, rather than equality, because there may be multiple solutions to the robust-reward control problem.
\begin{proof}
Let $\pi^*$ be a solution to the robust-reward control problem:
\begin{equation*}
    \pi^* \in \argmax_\pi \min_{r_i \in R} \E_\pi \left[ \sum_{t=1}^T r_i(\cs, \ca) \right] .
\end{equation*}
Define the MaxEnt RL reward function as follows:
\begin{equation*}
    r(\cs, \ca) = \log \pi^*(\ca \mid \cs) .
\end{equation*}
Substituting this reward function in Equation~\ref{eq:maxent-obj}, we see that the unique solution is $\pi = \pi^*$.
\end{proof}
Intuitively, this theorem states that we can use MaxEnt RL to solve \emph{any} robust-reward control problem that requires robustness with respect to any arbitrary set of rewards, if we can find the right corresponding reward function $r$ for MaxEnt RL. One way of viewing this theorem is as providing an avenue to sidestep the challenges of robust-reward optimization.
Unfortunately, we will still have to perform robust optimization to learn this magical reward function, but at least the cost of robust optimization might be amortized. In some sense, this result is similar to~\citet{ilyas2019adversarial}.

\subsection{Finding the Robust Reward Function}
\label{sec:reducing-robust-control}

In the previous section, we showed that a policy robust against any set of reward functions $R$ can be obtained by solving a MaxEnt RL problem. However, this requires calculating a reward function $r^*$ for MaxEnt RL, which is not in general an element in $R$. In this section, we aim to find the MaxEnt reward function that results in the optimal policy for the robust-reward control problem.
Our main idea is to find a reward function $r^*$ such that its robust set, $R_{r^*}$, contains the set of reward functions we want to be robust against, $R$. That is, for each $r_i \in R$, we want
\begin{equation*}
    r_i(\cs, \ca) = r^*(\cs, \ca) + u_\cs(\ca) \quad \text{for some $u_\cs(\ca)$ satisfying} \quad \int_\gA e^{-u_\cs(\ca)} d\ca \le 1 \; \forall \cs.
\end{equation*}
Replacing $u$ with $r' - r^*$, we see that the MaxEnt reward function $r$ must satisfy the following constraints:
\begin{equation*}
    \int_\gA e^{r^*(\cs, \ca) - r'(\cs, \ca)} d\ca \le 1 \; \forall \cs \in \gS, r' \in R .
\end{equation*}
We define $R^*(R)$ as the set of reward functions satisfying this constraint w.r.t. reward functions in $R$:
\begin{equation*}
    R^*(R) \triangleq \left\{r^* \; \bigg\vert \; \int_\gA e^{r^*(\cs, \ca) - r'(\cs, \ca)} d\ca \le 1 \; \forall \cs \in \gS, r' \in R \right\}
\end{equation*}
Note that we can satisfy the constraint by making $r^*$ arbitrarily negative, so the set $R^*(R)$ is non-empty.
We now argue that all any applying MaxEnt RL to any reward function in $r^* \in R^*(R)$ lower bounds the robust-reward control objective.
\begin{lemma}
Let a set of reward functions $R$ be given, and let $r^* \in R^*(R)$ be an arbitrary reward function belonging to the feasible set of MaxEnt reward functions. Then
\begin{equation*}
    J_\text{MaxEnt}(\pi, r^*) \le \min_{r' \in R} \E_\pi \left[\sum_{t=1}^T r'(\cs, \ca) \right] \qquad \forall \pi \in \Pi.
\end{equation*}
\end{lemma}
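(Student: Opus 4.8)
The plan is to reduce the claimed trajectory-level inequality to a per-state inequality and then close that with Jensen's inequality applied to $\log(\cdot)$, using exactly the defining constraint of $R^*(R)$. First I would fix an arbitrary $r' \in R$ and an arbitrary policy $\pi$, and rewrite the MaxEnt objective using the identity $\gH_\pi[\ca\mid\cs] = \E_{\ca\sim\pi(\ca\mid\cs)}[-\log\pi(\ca\mid\cs)]$ (with $\alpha=1$), so that
\begin{equation*}
 J_\text{MaxEnt}(\pi, r^*) = \E_\pi\!\left[\sum_{t=1}^T \big(r^*(\cs,\ca) - \log\pi(\ca\mid\cs)\big)\right].
\end{equation*}
Subtracting $\E_\pi[\sum_t r'(\cs,\ca)]$, it then suffices to show $\E_\pi[\sum_t (r^*(\cs,\ca) - r'(\cs,\ca) - \log\pi(\ca\mid\cs))] \le 0$. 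Writing this as $\sum_{t}\E_{\rho_t^\pi(\cs)}\big[\E_{\ca\sim\pi(\ca\mid\cs)}[r^*(\cs,\ca) - r'(\cs,\ca) - \log\pi(\ca\mid\cs)]\big]$, it is enough to prove the inner expectation is $\le 0$ for \emph{every} state $\cs$.

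The key step is the per-state bound. Fix $\cs$ and set $\Delta(\ca) \triangleq r^*(\cs,\ca) - r'(\cs,\ca)$. Then
\begin{equation*}
 \E_{\ca\sim\pi(\ca\mid\cs)}\big[\Delta(\ca) - \log\pi(\ca\mid\cs)\big] = \E_{\ca\sim\pi(\ca\mid\cs)}\left[\log\frac{e^{\Delta(\ca)}}{\pi(\ca\mid\cs)}\right] \le \log \E_{\ca\sim\pi(\ca\mid\cs)}\left[\frac{e^{\Delta(\ca)}}{\pi(\ca\mid\cs)}\right] = \log \int_{\gA} e^{\Delta(\ca)}\, d\ca \le \log 1 = 0,
\end{equation*}
where the first inequality is Jensen's inequality (concavity of $\log$), the following equality cancels $\pi(\ca\mid\cs)$ (the integral being over the support of $\pi(\cdot\mid\cs)$, which only helps the bound), and the last inequality is precisely the defining constraint of $R^*(R)$, namely $\int_\gA e^{r^*(\cs,\ca) - r'(\cs,\ca)}\,d\ca \le 1$. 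Summing over $t$ and taking the trajectory expectation gives $J_\text{MaxEnt}(\pi, r^*) \le \E_\pi[\sum_t r'(\cs,\ca)]$; since $r' \in R$ was arbitrary, taking the minimum over $r' \in R$ yields the claim for all $\pi$.

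I do not expect a serious obstacle here; the proof is short. The only point requiring a little care is handling policies without full support (so that $\log\pi$ and the substitution are well defined) — this is dealt with by noting the relevant integral collapses to the support of $\pi(\cdot\mid\cs)$, which can only decrease its value, so the constraint still applies. Worth remarking afterwards that, unlike Theorem~\ref{thm:reward-robustness}, this gives only a lower bound (not equality), because $R^*(R)$ merely guarantees $R \subseteq \tilde{\gR}(\pi)$ rather than equality of the two sets.
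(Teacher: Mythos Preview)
Your argument is correct. The paper actually states this lemma without an explicit proof; the surrounding text sets up the constraint defining $R^*(R)$ and then simply asserts the bound before moving on to its consequences. Your Jensen-based derivation is exactly the elementary justification one would supply, and it is essentially the one-sided version of the paper's Lemma~\ref{lemma:fenchel} (the Fenchel duality between conditional entropy and log-sum-exp): taking $f=\Delta=r^*-r'$ in that lemma gives $\gH_\pi[\ca\mid\cs]\le \E_\pi[-\Delta(\ca)]+\log\int_\gA e^{\Delta(\ca')}d\ca'$, and the defining constraint of $R^*(R)$ kills the last term. Your direct Jensen step is the same inequality written more compactly, so there is no substantive difference in route---you have just made explicit what the paper leaves implicit. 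The support-of-$\pi$ remark is a nice touch and handles the only edge case; the closing comment about why you get only an inequality (versus the equality in Theorem~\ref{thm:reward-robustness}) is accurate and worth keeping.
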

Note that this bound holds for all feasible reward functions and all policies, so it also holds for the maximum $r^*$:
\begin{equation*}
    \max_{r^* \in R^*(R)} J_\text{MaxEnt}(\pi, r^*) \le 
    \min_{r' \in R} \E_\pi \left[\sum_{t=1}^T r'(\cs, \ca) \right] \qquad \forall \pi \in \Pi. 
\end{equation*}
Defining $\pi^* = \argmax_\pi J_\text{MaxEnt}(\pi, r^*)$, we get the following inequality:
\begin{equation}
    \max_{r^* \in R^*(R), \pi \in \Pi} J_\text{MaxEnt}(\pi, r^*) \le \min_{r' \in R} \E_{\pi^*} \left[\sum_{t=1}^T r'(\cs, \ca) \right] \le \max_{\pi \in \Pi} \min_{r' \in R} \E_\pi \left[\sum_{t=1}^T r'(\cs, \ca) \right]. \label{eq:max-lower-bound}
\end{equation}
Thus, we can find the tightest lower bound by finding the policy $\pi$ and feasibly reward $r^*$ that maximize Equation~\ref{eq:max-lower-bound}:
\begin{align}
    \max_{r, \pi} & \quad \E_\pi \left[ \sum_{t=1}^T r(\cs, \ca) \right] + \gH_\pi[\ca \mid \cs] \label{eq:opt-problem} \\
    \text{s.t.} & \quad \int_\gA e^{r(\cs, \ca) - r'(\cs, \ca)} da \le 1 \; \forall \; \cs \in \gS, r' \in R . \nonumber
\end{align}
It is useful to note that the constraints are simply \textsc{LogSumExp} functions, which are convex. For continuous action spaces, we might approximate the constraint via sampling.
Given a particular policy, the optimization problem w.r.t. $r$ has a linear objective and convex constraint, so it can be solved extremely quickly using a convex optimization toolbox. Moreover, note that the problem can be solved independently for every state. The optimization problem is not necessarily convex in $\pi$.

\begin{figure}[t]
    \centering
    \includegraphics[width=0.6\linewidth]{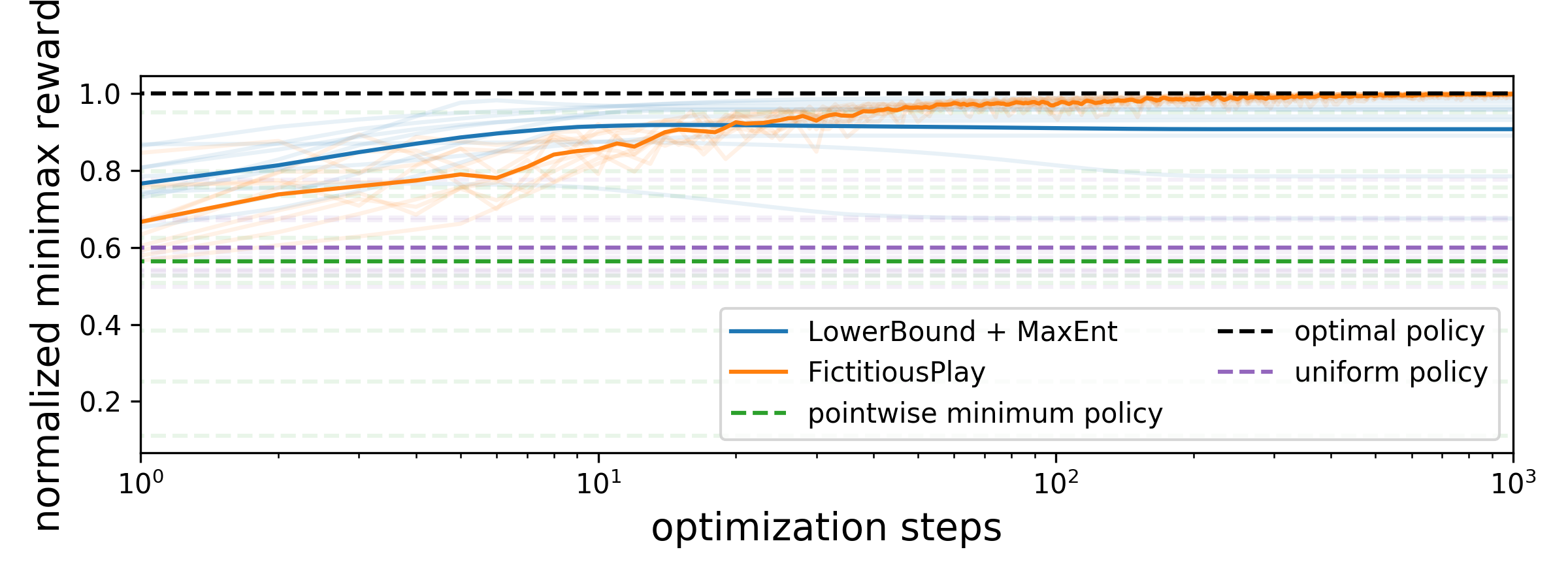}
    \caption{\textbf{Approximately solving an arbitrary robust-reward control problem.} In this experiment, we aim to solve the robust-reward control problem for an \emph{arbitrary} set of reward functions. While we know that MaxEnt RL can be used to solve arbitrary robust-reward control problems exactly, doing so requires that we already know the optimal policy (\S~\ref{sec:robust-reward-control}). Instead, we use the approach outlined in Sec.~\ref{sec:reducing-robust-control}, which allows us to \emph{approximately} solve an arbitrary robust-reward control problem without knowing the solution apriori. This approach (``LowerBound + MaxEnt'') achieves near-optimal minimax reward.}
    \label{fig:robust-reward-exp}
\end{figure}

\subsection{Another Computational Experiment}
This section presents an experiment to study the approach outlined above. Of particular interest is whether the lower bound (Eq~\ref{eq:max-lower-bound}) comes close to the optimal minimax policy.

We will solve robust-reward control problems on 5-armed bandits, where the robust set is a collection of 5 reward functions, each is drawn from a zero-mean, unit-variance Gaussian. For each reward function, we add a constant to all of the rewards to make them all positive. Doing so guarantees that the optimal minimax reward is positive. Since different bandit problems have different optimal minimax rewards, we will normalize the minimax reward so the maximum possible value is 1.

Our approach, which we refer to as ``LowerBound + MaxEnt'', solves the optimization problem in Equation~\ref{eq:opt-problem} by alternating between (1) solving a convex optimization problem to find the optimal reward function, and (2) computing the optimal MaxEnt RL policy for this reward function. Step 1 is done using CVXPY, while step 2 is done by exponentiating the reward function, and normalizing it to sum to one. Note that this approach is actually solving a harder problem: it is solving the robust-reward control problem for a much larger set of reward functions that contains the original set of reward functions. Because this approach is solving a more challenging problem, we do not expect that it will achieve the optimal minimax reward. However, we emphasize that this approach may be easier to implement than fictitious play, which we compare against. Different from experiments in Section~\ref{sec:robust-control-rewards}, the ``LowerBound + MaxEnt'' approach assumes access to the full reward function, not just the rewards for the actions taken. For fair comparison, fictitious play will also use a policy player that has access to the reward function. Fictitious play is guaranteed to converge to the optimal minimax policy, so we assume that the minimax reward it converges to is optimal.
We compare against two baselines. The ``pointwise minimum policy'' finds the optimal policy for a new reward function formed by taking the pointwise minimum of all reward functions: $\tilde{r}(\ca) = \min_{r \in R}r(\ca)$. This strategy is quite simple and intuitive. The other baseline is a ``uniform policy'' that chooses actions uniformly at random.

We ran each method on the same set of 10 robust-reward control bandit problems. In
Fig.~\ref{fig:robust-reward-exp}, we plot the (normalized) minimax reward obtained by each method on each problem, as well as the average performance across all 10 problems. The ``LowerBound + MaxEnt'' approach converges to a normalized minimax reward of 0.91, close to the optimal value of 1. In contrast, the ``pointwise minimum policy'' and the ``uniform policy'' perform poorly, obtaining normalized minimax rewards of 0.56 and 0.60, respectively. In summary, while the method proposed for converting robust-reward control problems to MaxEnt RL problems does not converge to the optimal minimax policy, empirically it performs well.

\section{Experimental Details}
\label{appendix:experiments}

\subsection{Dynamics Robustness Experiments (Fig.~\ref{fig:dynamics})}
We trained the MaxEnt RL policy using the SAC implementation from TF Agents~\citep{guadarrama2018tf} with most of the default parameters (unless noted below).

\paragraph{Fig.~\ref{fig:dynamics-obstacle}}
We used the standard \texttt{Pusher-v2} task from OpenAI Gym~\citep{brockman2016openai}. We used a fixed entropy coefficient of $1e-2$ for the MaxEnt RL results. For the standard RL results, we used the exact same codebase to avoid introducing any confounding factors, simply setting the entropy coefficient to a very small value $1e-5$. The obstacle is a series of three axis-aligned blocks with width 3cm, centered at (0.32, -0.2), (0.35, -0.23), and (0.38, -0.26). We chose these positions to be roughly along the perpendicular bisector of the line between the puck's initial position and the goal. We used 100 episodes of length 100 for evaluating each method. To decrease the variance in the results, we fixed the initial state of each episode:
\begin{align*}
    \text{qpos} &= [ 0.  ,  0.  ,  0.  ,  0.  ,  0.  ,  0.  ,  0.  , -0.3, -0.2, 0.  ,  0.  ], \\
    \text{qvel} &= [0., 0., 0., 0., 0., 0., 0., 0., 0., 0., 0.].
\end{align*}

\paragraph{Fig.~\ref{fig:dynamics-button}}
We used the \texttt{SawyerButtonPressEnv} environment from Metaworld~\citep{yu2020meta}, using a maximum episode length of 151. For this experiment, we perturbed the environment by modifying the observations such that the button appeared to be offset along the $Y$ axis. We recorded the average performance over 10 episodes. For this environment we used an entropy coefficient of $1e1$ for MaxEnt RL and $1e-100$ for standard RL.\footnote{We used a larger value for the entropy coefficient for standard RL in the previous experiment to avoid numerical stability problems.}

\paragraph{Fig.~\ref{fig:dynamics-force}}
We used the standard \texttt{Pusher-v2} task from OpenAI Gym~\citep{brockman2016openai}. We modified the environment to perturb the XY position of the puck at time $t = 20$. We randomly sampled an angle $\theta \sim \text{Unif}[0, 2 \pi]$ and displaced the puck in that direction by an amount given by the disturbance size. For evaluating the average reward of each policy on each disturbance size, we used 100 rollouts of length 151.

\paragraph{Fig.~\ref{fig:lava}}
We used a modified version of the 2D navigation task from~\citet{eysenbach2019search} with the following reward function:
\begin{equation*}
    r(\cs, \ca) = \|\cs - (8, 4)^T\|_2 - 10 \cdot  \mathbbm{1}(\cs \in \gS_\text{obstacle}).
\end{equation*}
Episodes were 48 steps long.

\paragraph{Fig.~\ref{fig:dynamic-perturbations}}
We used the peg insertion environment from~\citet{eysenbach2017leave}. Episodes were at most 200 steps long, but terminated as soon as the peg was in the hole. The agent received a reward of +100 once the peg was in the hole, in addition to the reward shaping terms described in~\citet{eysenbach2017leave}.

\subsection{Reward Robustness Experiments (Fig.~\ref{fig:mujoco})}
\label{sec:mujoco}
Following~\citet{haarnoja2018soft}, we use an entropy coefficient of $\alpha = 1/5$. In Fig.~\ref{fig:mujoco}, the thick line is the average over five random seeds (thin lines).  Fig.~\ref{fig:mujoco-full} shows the evaluation of all methods on both the expected reward and the minimax reward objectives. Note that the minimax reward can be computed analytically as
\begin{equation*}
    \tilde{r}(\cs, \ca) = r(\cs, \ca) - \log \pi(\ca \mid \cs).
\end{equation*}

\subsection{Bandits (Fig.~\ref{fig:robust-reward-exp})}
The mean for arm $i$, $\mu_i$, is drawn from a zero-mean, unit-variance Gaussian distribution, $\mu_i \sim \gN(0, 1)$. When the agent pulls arm $i$, it observes a noisy reward $r_i \sim \gN(\mu_i, 1)$. The thick line is the average over 10 random seeds (thin lines).

\begin{figure}[!th]
    \centering
    \includegraphics[width=0.8\linewidth]{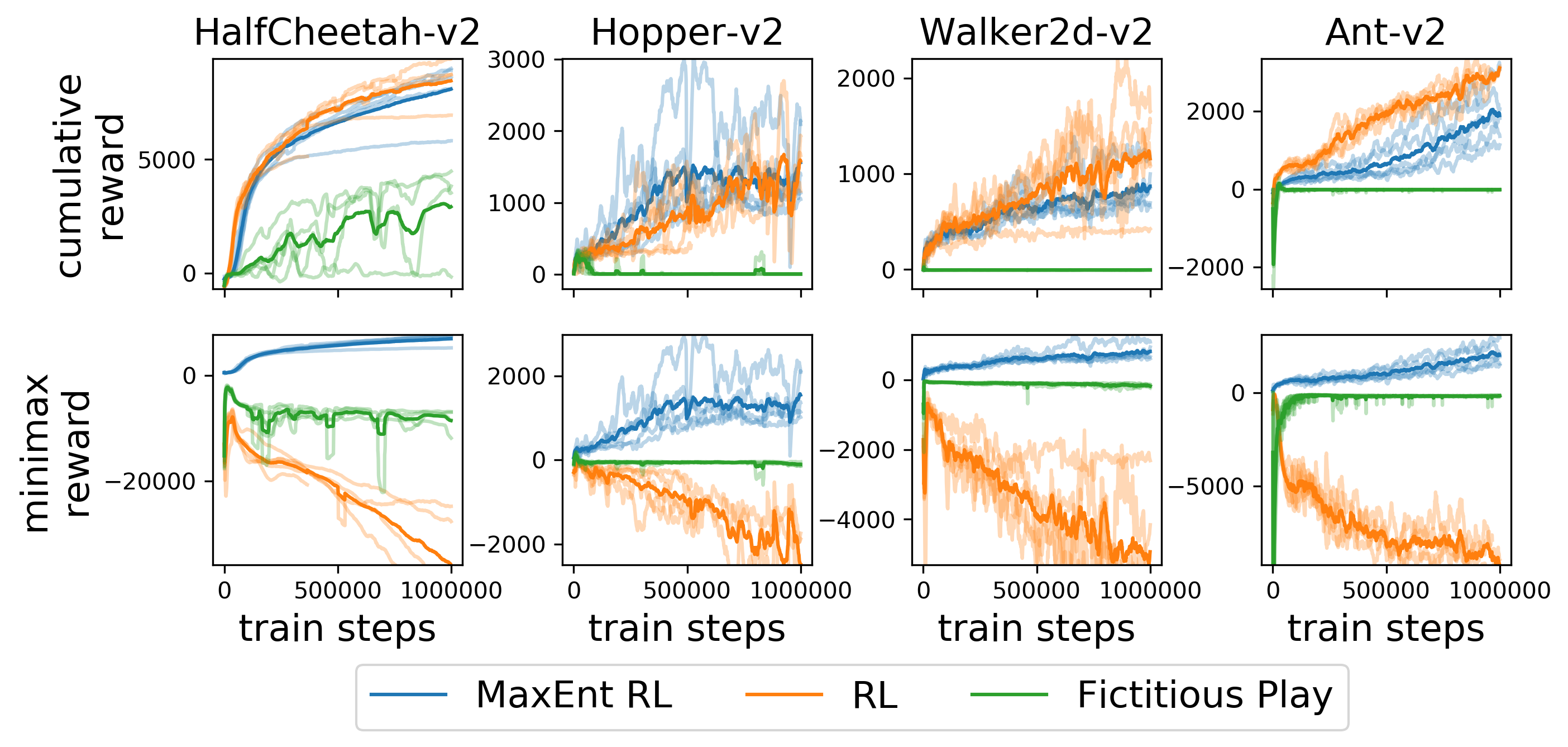}
    \caption{\figtop \; Both RL (SVG) and MaxEnt RL (SAC) effectively maximize expected reward. \figbottom \; Only MaxEnt RL succeeds in maximizing the minimax reward.}
    \label{fig:mujoco-full}
    \vspace{-1em}
\end{figure}

\section{Robust RL Ablation Experiments}

We ran additional ablation experiments to study whether simple modifications to the robust RL baseline from Section~\ref{sec:experiments}~\citep{tessler2019action} would improve results. These experiments will help discern whether the good performance of MaxEnt RL, relative to the PR-MDP and NR-MDP baselines, comes from using a more recent RL algorithm (SAC instead of DDPG), or from the entropy regularization. We used the following ablations:
\begin{enumerate}
    \item \textbf{larger network}: We increased the width of all neural networks to 256 units, so that the network size the exactly the same as for the MaxEnt RL method.
    \item \textbf{dual critic}: Following TD3~\citep{fujimoto2018addressing}, we added a second Q function and used the minimum over two (target) Q functions. This change affects not only the actor and critic updates, but also the updates to the adversary learned by PR/NR-MDP.
    \item \textbf{more exploration}: We increased the exploration noise from 0.2 to 1.0.
\end{enumerate}
We implemented these ablations by modifying the open source code released by~\citet{tessler2019action}.

The results, shown in Fig.~\ref{fig:robust-rl-ablations}, show that these changes do not significantly improve the performance of the NR-MDP or PR-MDP. Perhaps the one exception is on the Hopper-v2 task, where PR-MDP with the larger networks is now the best method for large relative masses. The dual critic ablation generally performs worse than the baseline. We hypothesize that this poor performance is caused by using the (pessimistic) minimum over two Q functions for updating the adversary, which may result in a weaker adversary. This experiment suggests that incorporating the dual critic trick into the action robustness framework may require some non-trivial design decisions.

\begin{figure}[th]
    \centering
    \begin{subfigure}[c]{\linewidth}
        \includegraphics[width=\linewidth]{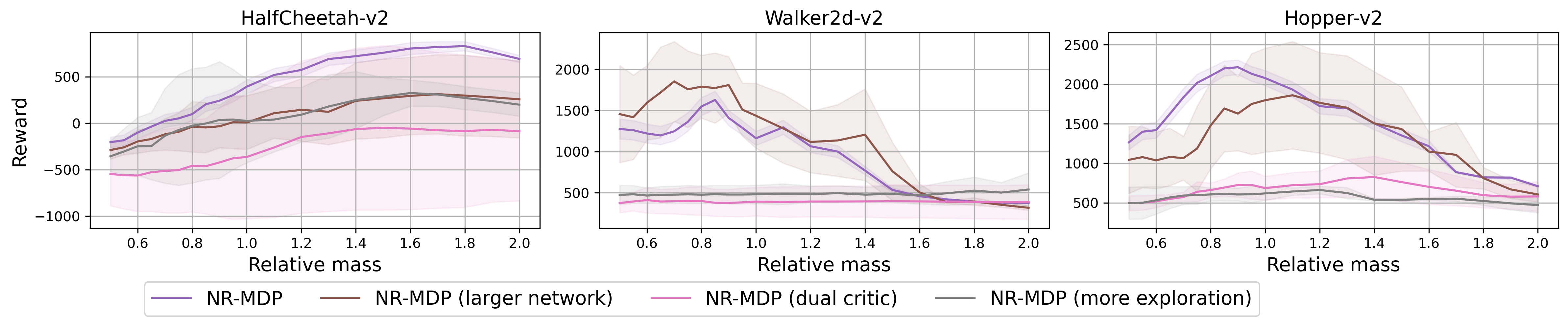}
        \caption{NR-MDP ablations.}
    \end{subfigure}
    \begin{subfigure}[c]{\linewidth}
        \includegraphics[width=\linewidth]{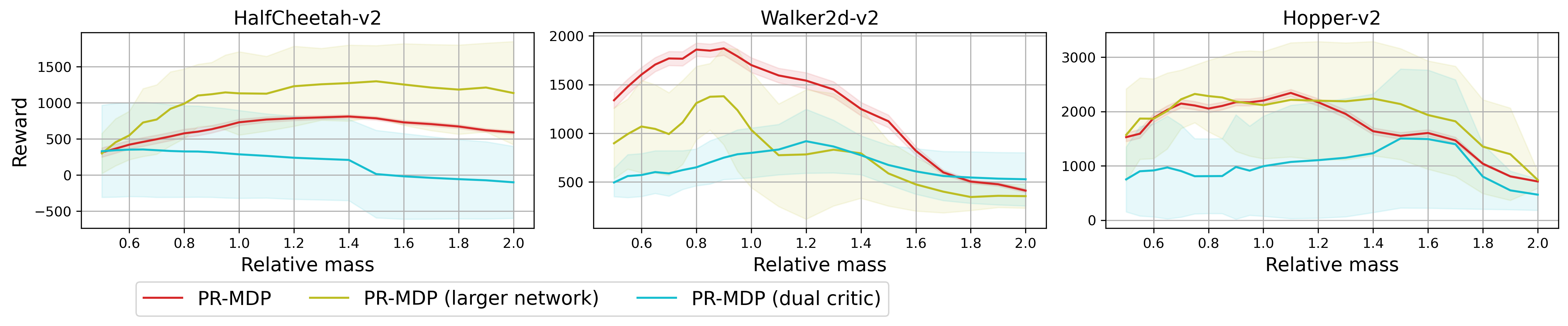}
        \caption{PR-MDP ablations.}
    \end{subfigure}
     \begin{subfigure}[c]{\linewidth}
        \includegraphics[width=\linewidth]{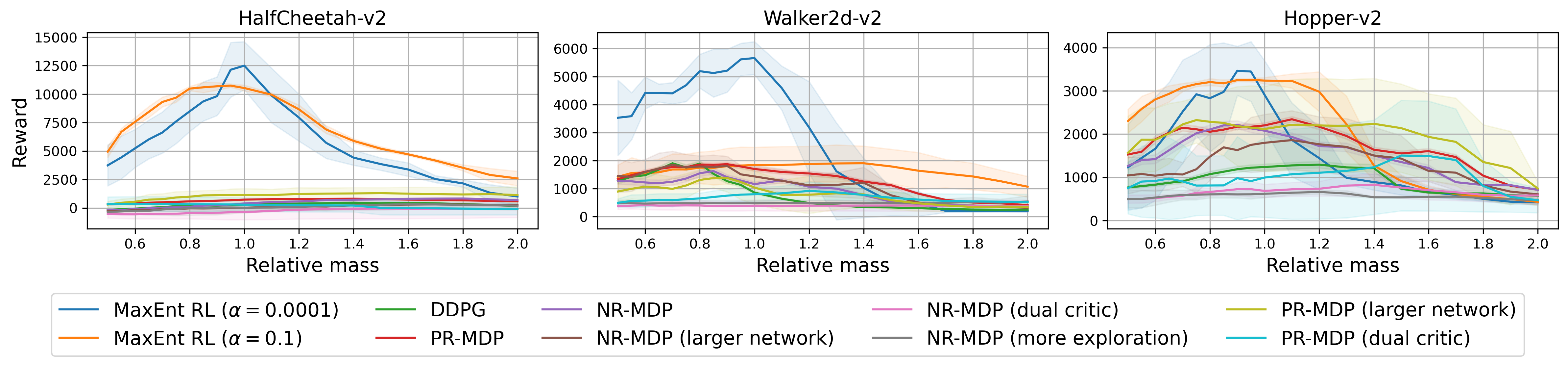}
        \caption{Ablation experiments above overlaid on Fig.~\ref{fig:action-robust}.}
    \end{subfigure}
    \caption{\textbf{Robust RL Ablation Experiments}: Ablations of the action robustness framework~\citep{tessler2019action} that use larger networks, multiple Q functions, or perform more exploration do not perform significantly better.}
    \label{fig:robust-rl-ablations}
\end{figure}


\begin{thebibliography}{66}
\providecommand{\natexlab}[1]{#1}
\providecommand{\url}[1]{\texttt{#1}}
\expandafter\ifx\csname urlstyle\endcsname\relax
  \providecommand{\doi}[1]{doi: #1}\else
  \providecommand{\doi}{doi: \begingroup \urlstyle{rm}\Url}\fi

\bibitem[Abdolmaleki et~al.(2018)Abdolmaleki, Springenberg, Tassa, Munos,
  Heess, and Riedmiller]{abdolmaleki2018maximum}
Abbas Abdolmaleki, Jost~Tobias Springenberg, Yuval Tassa, Remi Munos, Nicolas
  Heess, and Martin Riedmiller.
\newblock Maximum a posteriori policy optimisation.
\newblock In \emph{International Conference on Learning Representations}, 2018.

\bibitem[Achiam et~al.(2017)Achiam, Held, Tamar, and
  Abbeel]{achiam2017constrained}
Joshua Achiam, David Held, Aviv Tamar, and Pieter Abbeel.
\newblock Constrained policy optimization.
\newblock In \emph{International Conference on Machine Learning}, pp.\  22--31.
  PMLR, 2017.

\bibitem[Amodei et~al.(2016)Amodei, Olah, Steinhardt, Christiano, Schulman, and
  Man{\'e}]{amodei2016concrete}
Dario Amodei, Chris Olah, Jacob Steinhardt, Paul Christiano, John Schulman, and
  Dan Man{\'e}.
\newblock Concrete problems in {AI} safety.
\newblock \emph{arXiv preprint arXiv:1606.06565}, 2016.

\bibitem[Bagnell et~al.(2001)Bagnell, Ng, and Schneider]{bagnell2001solving}
J~Andrew Bagnell, Andrew~Y Ng, and Jeff~G Schneider.
\newblock Solving uncertain {M}arkov decision processes.
\newblock 2001.

\bibitem[Bobu et~al.(2020)Bobu, Scobee, Fisac, Sastry, and
  Dragan]{bobu2020less}
Andreea Bobu, Dexter~RR Scobee, Jaime~F Fisac, S~Shankar Sastry, and Anca~D
  Dragan.
\newblock Less is more: Rethinking probabilistic models of human behavior.
\newblock In \emph{International Conference on Human-Robot Interaction}, pp.\
  429--437, 2020.

\bibitem[Boyd et~al.(2004)Boyd, Boyd, and Vandenberghe]{boyd2004convex}
Stephen Boyd, Stephen~P Boyd, and Lieven Vandenberghe.
\newblock \emph{Convex optimization}.
\newblock Cambridge University Press, 2004.

\bibitem[Brockman et~al.(2016)Brockman, Cheung, Pettersson, Schneider,
  Schulman, Tang, and Zaremba]{brockman2016openai}
Greg Brockman, Vicki Cheung, Ludwig Pettersson, Jonas Schneider, John Schulman,
  Jie Tang, and Wojciech Zaremba.
\newblock Open{AI} {G}ym.
\newblock \emph{arXiv preprint arXiv:1606.01540}, 2016.

\bibitem[Brown(1951)]{brown1951iterative}
George~W Brown.
\newblock Iterative solution of games by fictitious play.
\newblock \emph{Activity analysis of production and allocation}, 13\penalty0
  (1):\penalty0 374--376, 1951.

\bibitem[Carrara et~al.(2019)Carrara, Leurent, Laroche, Urvoy, Maillard, and
  Pietquin]{carrara2019budgeted}
Nicolas Carrara, Edouard Leurent, Romain Laroche, Tanguy Urvoy, Odalric-Ambrym
  Maillard, and Olivier Pietquin.
\newblock Budgeted reinforcement learning in continuous state space.
\newblock In \emph{Advances in Neural Information Processing Systems}, pp.\
  9299--9309, 2019.

\bibitem[Chow et~al.(2017)Chow, Ghavamzadeh, Janson, and Pavone]{chow2017risk}
Yinlam Chow, Mohammad Ghavamzadeh, Lucas Janson, and Marco Pavone.
\newblock Risk-constrained reinforcement learning with percentile risk
  criteria.
\newblock \emph{The Journal of Machine Learning Research}, 18\penalty0
  (1):\penalty0 6070--6120, 2017.

\bibitem[Chow et~al.(2019)Chow, Nachum, Faust, Duenez-Guzman, and
  Ghavamzadeh]{chow2019lyapunov}
Yinlam Chow, Ofir Nachum, Aleksandra Faust, Edgar Duenez-Guzman, and Mohammad
  Ghavamzadeh.
\newblock Lyapunov-based safe policy optimization for continuous control.
\newblock \emph{arXiv preprint arXiv:1901.10031}, 2019.

\bibitem[Clark \& Amodei(2016)Clark and Amodei]{clark2016faulty}
Jack Clark and Dario Amodei.
\newblock Faulty reward functions in the wild.
\newblock 2016.
\newblock URL \url{https://blog.openai.com/faulty-reward-functions}.

\bibitem[Cobbe et~al.(2019)Cobbe, Klimov, Hesse, Kim, and
  Schulman]{cobbe2019quantifying}
Karl Cobbe, Oleg Klimov, Chris Hesse, Taehoon Kim, and John Schulman.
\newblock Quantifying generalization in reinforcement learning.
\newblock In \emph{International Conference on Machine Learning}, pp.\
  1282--1289. PMLR, 2019.

\bibitem[Derman et~al.(2021)Derman, Geist, and Mannor]{derman2021twice}
Esther Derman, Matthieu Geist, and Shie Mannor.
\newblock Twice regularized {MDP}s and the equivalence between robustness and
  regularization.
\newblock \emph{Advances in Neural Information Processing Systems}, 34, 2021.

\bibitem[Doyle et~al.(2013)Doyle, Francis, and Tannenbaum]{doyle2013feedback}
John~C Doyle, Bruce~A Francis, and Allen~R Tannenbaum.
\newblock \emph{Feedback control theory}.
\newblock Courier Corporation, 2013.

\bibitem[Eysenbach et~al.(2018)Eysenbach, Gu, Ibarz, and
  Levine]{eysenbach2017leave}
Benjamin Eysenbach, Shixiang Gu, Julian Ibarz, and Sergey Levine.
\newblock Leave no trace: Learning to reset for safe and autonomous
  reinforcement learning.
\newblock In \emph{International Conference on Learning Representations}, 2018.

\bibitem[Eysenbach et~al.(2019)Eysenbach, Salakhutdinov, and
  Levine]{eysenbach2019search}
Benjamin Eysenbach, Ruslan Salakhutdinov, and Sergey Levine.
\newblock Search on the replay buffer: Bridging planning and reinforcement
  learning.
\newblock \emph{Advances in Neural Information Processing Systems}, 32, 2019.

\bibitem[Fox et~al.(2016)Fox, Pakman, and Tishby]{fox2015taming}
Roy Fox, Ari Pakman, and Naftali Tishby.
\newblock Taming the noise in reinforcement learning via soft updates.
\newblock In \emph{Uncertainty in Artificial Intelligence}, pp.\  202--211,
  2016.

\bibitem[Fu et~al.(2018)Fu, Singh, Ghosh, Yang, and Levine]{fu2018variational}
Justin Fu, Avi Singh, Dibya Ghosh, Larry Yang, and Sergey Levine.
\newblock Variational inverse control with events: A general framework for
  data-driven reward definition.
\newblock \emph{Advances in Neural Information Processing Systems}, 31, 2018.

\bibitem[Fujimoto et~al.(2018)Fujimoto, Hoof, and
  Meger]{fujimoto2018addressing}
Scott Fujimoto, Herke Hoof, and David Meger.
\newblock Addressing function approximation error in actor-critic methods.
\newblock In \emph{International Conference on Machine Learning}, pp.\
  1587--1596. PMLR, 2018.

\bibitem[Gr{\"u}nwald et~al.(2004)Gr{\"u}nwald, Dawid,
  et~al.]{grunwald2004game}
Peter~D Gr{\"u}nwald, A~Philip Dawid, et~al.
\newblock Game theory, maximum entropy, minimum discrepancy and robust bayesian
  decision theory.
\newblock \emph{the Annals of Statistics}, 32\penalty0 (4):\penalty0
  1367--1433, 2004.

\bibitem[Guadarrama et~al.(2018)Guadarrama, Korattikara, Ramirez, Castro,
  Holly, Fishman, Wang, Gonina, Harris, Vanhoucke, et~al.]{guadarrama2018tf}
Sergio Guadarrama, Anoop Korattikara, Oscar Ramirez, Pablo Castro, Ethan Holly,
  Sam Fishman, Ke~Wang, Ekaterina Gonina, Chris Harris, Vincent Vanhoucke,
  et~al.
\newblock {TF-Agents}: A library for reinforcement learning in tensorflow,
  2018.
\newblock URL \url{https://github.com/tensorflow/agents}.

\bibitem[Haarnoja et~al.(2017)Haarnoja, Tang, Abbeel, and
  Levine]{haarnoja2017reinforcement}
Tuomas Haarnoja, Haoran Tang, Pieter Abbeel, and Sergey Levine.
\newblock Reinforcement learning with deep energy-based policies.
\newblock In \emph{International Conference on Machine Learning}, pp.\
  1352--1361. PMLR, 2017.

\bibitem[Haarnoja et~al.(2018{\natexlab{a}})Haarnoja, Zhou, Abbeel, and
  Levine]{haarnoja2018soft}
Tuomas Haarnoja, Aurick Zhou, Pieter Abbeel, and Sergey Levine.
\newblock Soft actor-critic: Off-policy maximum entropy deep reinforcement
  learning with a stochastic actor.
\newblock In \emph{International Conference on Machine Learning}, pp.\
  1861--1870. PMLR, 2018{\natexlab{a}}.

\bibitem[Haarnoja et~al.(2018{\natexlab{b}})Haarnoja, Zhou, Hartikainen,
  Tucker, Ha, Tan, Kumar, Zhu, Gupta, Abbeel, et~al.]{haarnoja2018applications}
Tuomas Haarnoja, Aurick Zhou, Kristian Hartikainen, George Tucker, Sehoon Ha,
  Jie Tan, Vikash Kumar, Henry Zhu, Abhishek Gupta, Pieter Abbeel, et~al.
\newblock Soft actor-critic algorithms and applications.
\newblock \emph{arXiv preprint arXiv:1812.05905}, 2018{\natexlab{b}}.

\bibitem[Haarnoja et~al.(2019)Haarnoja, Ha, Zhou, Tan, Tucker, and
  Levine]{haarnoja2018learning}
Tuomas Haarnoja, Sehoon Ha, Aurick Zhou, Jie Tan, George Tucker, and Sergey
  Levine.
\newblock Learning to walk via deep reinforcement learning.
\newblock In \emph{Robotics: Science and Systems}, 2019.

\bibitem[Hadfield-Menell et~al.(2017)Hadfield-Menell, Milli, Abbeel, Russell,
  and Dragan]{hadfield2017inverse}
Dylan Hadfield-Menell, Smitha Milli, Pieter Abbeel, Stuart~J Russell, and Anca
  Dragan.
\newblock Inverse reward design.
\newblock \emph{Advances in Neural Information Processing Systems}, 30, 2017.

\bibitem[Hansen(2016)]{hansen2016cma}
Nikolaus Hansen.
\newblock The {CMA} evolution strategy: A tutorial.
\newblock \emph{arXiv preprint arXiv:1604.00772}, 2016.

\bibitem[Heess et~al.(2015)Heess, Wayne, Silver, Lillicrap, Erez, and
  Tassa]{heess2015learning}
Nicolas Heess, Gregory Wayne, David Silver, Timothy Lillicrap, Tom Erez, and
  Yuval Tassa.
\newblock Learning continuous control policies by stochastic value gradients.
\newblock In \emph{Advances in Neural Information Processing Systems}, pp.\
  2944--2952, 2015.

\bibitem[Huang et~al.(2019)Huang, Su, Zhu, and Chen]{huang2019svqn}
Shiyu Huang, Hang Su, Jun Zhu, and Ting Chen.
\newblock Svqn: Sequential variational soft q-learning networks.
\newblock In \emph{International Conference on Learning Representations}, 2019.

\bibitem[Ilyas et~al.(2019)Ilyas, Santurkar, Tsipras, Engstrom, Tran, and
  Madry]{ilyas2019adversarial}
Andrew Ilyas, Shibani Santurkar, Dimitris Tsipras, Logan Engstrom, Brandon
  Tran, and Aleksander Madry.
\newblock Adversarial examples are not bugs, they are features.
\newblock \emph{Advances in Neural Information Processing Systems}, 32, 2019.

\bibitem[Justesen et~al.(2018)Justesen, Torrado, Bontrager, Khalifa, Togelius,
  and Risi]{justesen2018illuminating}
Niels Justesen, Ruben~Rodriguez Torrado, Philip Bontrager, Ahmed Khalifa,
  Julian Togelius, and Sebastian Risi.
\newblock Illuminating generalization in deep reinforcement learning through
  procedural level generation.
\newblock \emph{arXiv preprint arXiv:1806.10729}, 2018.

\bibitem[Kamalaruban et~al.(2020)Kamalaruban, Huang, Hsieh, Rolland, Shi, and
  Cevher]{kamalaruban2020robust}
Parameswaran Kamalaruban, Yu-Ting Huang, Ya-Ping Hsieh, Paul Rolland, Cheng
  Shi, and Volkan Cevher.
\newblock Robust reinforcement learning via adversarial training with langevin
  dynamics.
\newblock \emph{Advances in Neural Information Processing Systems},
  33:\penalty0 8127--8138, 2020.

\bibitem[Kappen(2005)]{kappen2005path}
Hilbert~J Kappen.
\newblock Path integrals and symmetry breaking for optimal control theory.
\newblock \emph{Journal of statistical mechanics: theory and experiment},
  2005\penalty0 (11):\penalty0 P11011, 2005.

\bibitem[Lazaric(2012)]{lazaric2012transfer}
Alessandro Lazaric.
\newblock Transfer in reinforcement learning: a framework and a survey.
\newblock In \emph{Reinforcement Learning}, pp.\  143--173. Springer, 2012.

\bibitem[Lecarpentier et~al.(2020)Lecarpentier, Abel, Asadi, Jinnai, Rachelson,
  and Littman]{lecarpentier2020lipschitz}
Erwan Lecarpentier, David Abel, Kavosh Asadi, Yuu Jinnai, Emmanuel Rachelson,
  and Michael~L Littman.
\newblock Lipschitz lifelong reinforcement learning.
\newblock \emph{arXiv preprint arXiv:2001.05411}, 2020.

\bibitem[Lee et~al.(2019)Lee, Kim, Lim, Choi, and Oh]{lee2019tsallis}
Kyungjae Lee, Sungyub Kim, Sungbin Lim, Sungjoon Choi, and Songhwai Oh.
\newblock Tsallis reinforcement learning: A unified framework for maximum
  entropy reinforcement learning.
\newblock \emph{arXiv preprint arXiv:1902.00137}, 2019.

\bibitem[Levine(2018)]{levine2018reinforcement}
Sergey Levine.
\newblock Reinforcement learning and control as probabilistic inference:
  Tutorial and review.
\newblock \emph{arXiv preprint arXiv:1805.00909}, 2018.

\bibitem[Michaud et~al.(2020)Michaud, Gleave, and
  Russell]{michaud2020understanding}
Eric~J Michaud, Adam Gleave, and Stuart Russell.
\newblock Understanding learned reward functions.
\newblock \emph{arXiv preprint arXiv:2012.05862}, 2020.

\bibitem[Mihatsch \& Neuneier(2002)Mihatsch and Neuneier]{mihatsch2002risk}
Oliver Mihatsch and Ralph Neuneier.
\newblock Risk-sensitive reinforcement learning.
\newblock \emph{Machine learning}, 49\penalty0 (2-3):\penalty0 267--290, 2002.

\bibitem[Morimoto \& Doya(2005)Morimoto and Doya]{morimoto2005robust}
Jun Morimoto and Kenji Doya.
\newblock Robust reinforcement learning.
\newblock \emph{Neural computation}, 17\penalty0 (2):\penalty0 335--359, 2005.

\bibitem[Nachum et~al.(2017)Nachum, Norouzi, Xu, and
  Schuurmans]{nachum2017bridging}
Ofir Nachum, Mohammad Norouzi, Kelvin Xu, and Dale Schuurmans.
\newblock Bridging the gap between value and policy based reinforcement
  learning.
\newblock In \emph{Advances in Neural Information Processing Systems}, pp.\
  2775--2785, 2017.

\bibitem[Nilim \& Ghaoui(2003)Nilim and Ghaoui]{nilim2003robustness}
Arnab Nilim and Laurent Ghaoui.
\newblock Robustness in markov decision problems with uncertain transition
  matrices.
\newblock \emph{Advances in Neural Information Processing Systems},
  16:\penalty0 839--846, 2003.

\bibitem[Peng et~al.(2018)Peng, Andrychowicz, Zaremba, and Abbeel]{peng2018sim}
Xue~Bin Peng, Marcin Andrychowicz, Wojciech Zaremba, and Pieter Abbeel.
\newblock Sim-to-real transfer of robotic control with dynamics randomization.
\newblock In \emph{International Conference on Robotics and Automation}, pp.\
  3803--3810. IEEE, 2018.

\bibitem[Pinto et~al.(2017)Pinto, Davidson, Sukthankar, and
  Gupta]{pinto2017robust}
Lerrel Pinto, James Davidson, Rahul Sukthankar, and Abhinav Gupta.
\newblock Robust adversarial reinforcement learning.
\newblock In \emph{International Conference on Machine Learning}, pp.\
  2817--2826. JMLR. org, 2017.

\bibitem[Puterman(2014)]{puterman2014markov}
Martin~L Puterman.
\newblock \emph{Markov Decision Processes.: Discrete Stochastic Dynamic
  Programming}.
\newblock John Wiley \& Sons, 2014.

\bibitem[Rajeswaran et~al.(2016)Rajeswaran, Ghotra, Ravindran, and
  Levine]{rajeswaran2016epopt}
Aravind Rajeswaran, Sarvjeet Ghotra, Balaraman Ravindran, and Sergey Levine.
\newblock Epopt: Learning robust neural network policies using model ensembles.
\newblock \emph{arXiv preprint arXiv:1610.01283}, 2016.

\bibitem[Rockafellar(1970)]{rockafellar}
R.~Tyrrell Rockafellar.
\newblock \emph{Convex Analysis}.
\newblock Princeton University Press, 1970.
\newblock ISBN 9780691015866.
\newblock URL \url{http://www.jstor.org/stable/j.ctt14bs1ff}.

\bibitem[Russel \& Petrik(2019)Russel and Petrik]{russel2019beyond}
Reazul~Hasan Russel and Marek Petrik.
\newblock Beyond confidence regions: Tight bayesian ambiguity sets for robust
  {MDP}s.
\newblock \emph{Advances in Neural Information Processing Systems}, 2019.

\bibitem[Russel et~al.(2020)Russel, Behzadian, and Petrik]{russel2020entropic}
Reazul~Hasan Russel, Bahram Behzadian, and Marek Petrik.
\newblock Entropic risk constrained soft-robust policy optimization.
\newblock \emph{arXiv preprint arXiv:2006.11679}, 2020.

\bibitem[Russel et~al.(2021)Russel, Benosman, Van~Baar, and
  Corcodel]{russel2021lyapunov}
Reazul~Hasan Russel, Mouhacine Benosman, Jeroen Van~Baar, and Radu Corcodel.
\newblock Lyapunov robust constrained-{MDP}s: Soft-constrained robustly stable
  policy optimization under model uncertainty.
\newblock \emph{arXiv preprint arXiv:2108.02701}, 2021.

\bibitem[Sadeghi \& Levine(2016)Sadeghi and Levine]{sadeghi2016cad2rl}
Fereshteh Sadeghi and Sergey Levine.
\newblock {CAD2RL}: Real single-image flight without a single real image.
\newblock \emph{arXiv preprint arXiv:1611.04201}, 2016.

\bibitem[Tang et~al.(2020)Tang, Zhang, and Salakhutdinov]{tang2019worst}
Yichuan~Charlie Tang, Jian Zhang, and Ruslan Salakhutdinov.
\newblock Worst cases policy gradients.
\newblock In \emph{Conference on Robot Learning}, pp.\  1078--1093. PMLR, 2020.

\bibitem[Tessler et~al.(2019)Tessler, Efroni, and Mannor]{tessler2019action}
Chen Tessler, Yonathan Efroni, and Shie Mannor.
\newblock Action robust reinforcement learning and applications in continuous
  control.
\newblock In \emph{International Conference on Machine Learning}, pp.\
  6215--6224. PMLR, 2019.

\bibitem[Thananjeyan et~al.(2021)Thananjeyan, Balakrishna, Nair, Luo,
  Srinivasan, Hwang, Gonzalez, Ibarz, Finn, and
  Goldberg]{thananjeyan2020recovery}
Brijen Thananjeyan, Ashwin Balakrishna, Suraj Nair, Michael Luo, Krishnan
  Srinivasan, Minho Hwang, Joseph~E Gonzalez, Julian Ibarz, Chelsea Finn, and
  Ken Goldberg.
\newblock Recovery rl: Safe reinforcement learning with learned recovery zones.
\newblock \emph{Robotics and Automation Letters}, 6\penalty0 (3):\penalty0
  4915--4922, 2021.

\bibitem[Theodorou et~al.(2010)Theodorou, Buchli, and
  Schaal]{theodorou2010generalized}
Evangelos Theodorou, Jonas Buchli, and Stefan Schaal.
\newblock A generalized path integral control approach to reinforcement
  learning.
\newblock \emph{journal of machine learning research}, 11\penalty0
  (Nov):\penalty0 3137--3181, 2010.

\bibitem[Todorov(2007)]{todorov2007linearly}
Emanuel Todorov.
\newblock Linearly-solvable {M}arkov decision problems.
\newblock In \emph{Advances in Neural Information Processing Systems}, pp.\
  1369--1376, 2007.

\bibitem[Toussaint(2009)]{toussaint2009robot}
Marc Toussaint.
\newblock Robot trajectory optimization using approximate inference.
\newblock In \emph{International Conference on Machine Learning}, pp.\
  1049--1056. ACM, 2009.

\bibitem[Vieillard et~al.(2020)Vieillard, Pietquin, and
  Geist]{vieillard2020munchausen}
Nino Vieillard, Olivier Pietquin, and Matthieu Geist.
\newblock Munchausen reinforcement learning.
\newblock \emph{Advances in Neural Information Processing Systems},
  33:\penalty0 4235--4246, 2020.

\bibitem[Xu \& Denil(2019)Xu and Denil]{xu2019positive}
Danfei Xu and Misha Denil.
\newblock Positive-unlabeled reward learning.
\newblock \emph{arXiv preprint arXiv:1911.00459}, 2019.

\bibitem[Yu et~al.(2020)Yu, Quillen, He, Julian, Hausman, Finn, and
  Levine]{yu2020meta}
Tianhe Yu, Deirdre Quillen, Zhanpeng He, Ryan Julian, Karol Hausman, Chelsea
  Finn, and Sergey Levine.
\newblock Meta-world: A benchmark and evaluation for multi-task and meta
  reinforcement learning.
\newblock In \emph{Conference on Robot Learning}, pp.\  1094--1100. PMLR, 2020.

\bibitem[Zhang et~al.(2018)Zhang, Vinyals, Munos, and Bengio]{zhang2018study}
Chiyuan Zhang, Oriol Vinyals, Remi Munos, and Samy Bengio.
\newblock A study on overfitting in deep reinforcement learning.
\newblock \emph{arXiv preprint arXiv:1804.06893}, 2018.

\bibitem[Zhou et~al.(1996)Zhou, Doyle, Glover, et~al.]{zhou1996robust}
Kemin Zhou, John~Comstock Doyle, Keith Glover, et~al.
\newblock \emph{Robust and optimal control}, volume~40.
\newblock Prentice hall New Jersey, 1996.

\bibitem[Ziebart(2010)]{ziebart2010modeling}
Brian~D. Ziebart.
\newblock \emph{Modeling Purposeful Adaptive Behavior with the Principle of
  Maximum Causal Entropy}.
\newblock PhD thesis, Carnegie Mellon University, 2010.

\bibitem[Ziebart et~al.(2008)Ziebart, Maas, Bagnell, Dey,
  et~al.]{ziebart2008maximum}
Brian~D Ziebart, Andrew~L Maas, J~Andrew Bagnell, Anind~K Dey, et~al.
\newblock Maximum entropy inverse reinforcement learning.
\newblock In \emph{AAAI}, volume~8, pp.\  1433--1438. Chicago, IL, USA, 2008.

\bibitem[Ziebart et~al.(2011)Ziebart, Bagnell, and Dey]{ziebart2011maximum}
Brian~D Ziebart, J~Andrew Bagnell, and Anind~K Dey.
\newblock Maximum causal entropy correlated equilibria for markov games.
\newblock In \emph{International Conference on Autonomous Agents and Multiagent
  Systems}, pp.\  207--214, 2011.

\end{thebibliography}
\end{document}